\documentclass{article}



\usepackage{url}
\usepackage{fullpage}

\usepackage{algorithm}
\usepackage{algorithmic}

\usepackage{paralist}
\usepackage{graphicx} 
\usepackage{subfigure} 

\usepackage[outdir=figs/]{epstopdf}

\usepackage{blindtext}

\usepackage{amssymb,bm, amsmath, amsthm, mathrsfs, mathbbol, eufrak, comment, dsfont}

\newcommand{\G}{\mathcal{G}}
\newcommand{\tG}{\tilde{\mathcal{G}}}
\newcommand{\hG}{\widehat{\mathcal{G}}}

\newcommand{\R}{\mathbb{R}}

\newcommand{\vct}[1]{\bm{#1}}
\newcommand{\mtx}[1]{\bm{#1}}
\newcommand{\vx}{\vct{x}}

\newcommand{\vy}{\vct{y}}
\newcommand{\vz}{\vct{z}}
\newcommand{\vv}{\vct{v}}
\newcommand{\va}{\vct{a}}

\newcommand{\vw}{\vct{w}}

\newcommand{\vu}{\vct{u}}

\newcommand{\vg}{\vct{g}}

\newcommand{\that}{\widehat{\vct{\that}}}
\newcommand{\w}{\vw}
\newcommand{\wopt}{\w^*}
\newcommand{\wh}{\widehat{\w}}
\newcommand{\wti}{\widetilde{\w}}

\newcommand{\smax}{\sigma_{\max}}
\newcommand{\smin}{\sigma_{\min}}


\newcommand{\mX}{\mtx{X}}

\newcommand{\hvu}{\hat{\vu}}

\newcommand{\beq}{\begin{equation}}
\newcommand{\eeq}{\end{equation}}

\newcommand{\supp}{\operatorname{supp}}
\newcommand{\gsupp}{\operatorname{G-supp}}

\newtheorem{theorem}{Theorem}[section]
\newtheorem{lemma}[theorem]{Lemma}

\newtheorem{definition}[theorem]{Definition}

\newcommand{\proj}{P_k^{\G}}
\newcommand{\projh}{\widehat{P}_{k}^{\G}}
\newcommand{\gnorm}[1]{\|#1\|_0^\G}

\title{Structured Sparse Regression via Greedy Hard-Thresholding}

\author{
  Prateek Jain\\
  \texttt{Microsoft Research, India}
  \and
  Nikhil Rao \\
  \texttt{Technicolor R \& I, Los Altos, CA}
  \and
  Inderjit Dhillon \\
  \texttt{University of Texas at Austin}
}

\begin{document} 

\maketitle

%
%
%
\begin{abstract}
Several learning applications require solving high-dimensional regression problems where the relevant features belong to a small number of (overlapping) groups. For very large datasets and under standard sparsity constraints, hard thresholding methods have proven to be extremely efficient, but such methods require NP hard projections when dealing with overlapping groups. In this paper, we show that such NP-hard projections can not only be avoided by appealing to submodular optimization, but such methods come with strong theoretical guarantees even in the presence of poorly conditioned data (i.e. say when two features have correlation  $\geq 0.99$), which existing analyses cannot handle. These methods exhibit an interesting computation-accuracy trade-off and can be extended to significantly harder problems such as sparse overlapping groups. Experiments on both real and synthetic data validate our claims and demonstrate that the proposed methods are orders of magnitude faster than other greedy and convex relaxation techniques for learning with group-structured sparsity.

\end{abstract}
%


\section{Introduction}
\label{sec:intro}
High dimensional problems where the regressor belongs to a small number of groups play a critical role in many machine learning and signal processing applications, such as computational biology \cite{jacob} and multitask learning \cite{mtl_lounici}. In most  of these cases, the groups overlap, i.e., the same feature can belong to multiple groups.  For example, gene pathways overlap in computational biology applications, and parent-child pairs of wavelet transform coefficients overlap in signal processing applications.

The existing state-of-the-art methods for solving such group sparsity structured regression problems can be categorized into two broad classes: a) convex relaxation based methods , b) iterative hard thresholding (IHT) or greedy methods. In practice, IHT methods tend to be significantly more scalable than the (group-)lasso style methods that solve a convex program. But, these methods require a certain projection operator which in general is NP-hard to compute and often certain simple heuristics are used with relatively weak theoretical guarantees. Moreover, existing guarantees for both classes of methods require relatively restrictive assumptions on the data, like Restricted Isometry Property or variants thereof \cite{mbcs, zhang, nraistats, mtl_lounici}, that are unlikely to hold in most common applications. In fact, even under such settings, the group sparsity based convex programs offer at most polylogarithmic gains over standard sparsity based methods \cite{nraistats}.

Concretely, let us consider the following linear model: 
\begin{equation}\label{eq:model1}
  \vy=X\wopt+\bm{\beta},
\end{equation}
where $\bm{\beta}\sim  N(0, \lambda^2 I)$, $X\in \R^{n \times p}$, each row of $X$ is sampled i.i.d. s.t. $\vx_i\sim N(0, \Sigma)$, $1\leq i\leq n$, and $\wopt$ is a $k^*$-group sparse vector i.e. $\wopt$ can be expressed in terms of only $k^*$ groups, $G_j\subseteq [p]$. 

The existing analyses for both convex as well as hard thresholding based methods require $\kappa=\sigma_1/\sigma_p\leq c$, where $c$ is an absolute constant (like say $3$) and $\sigma_i$ is the $i$-th largest eigenvalue of $\Sigma$. This is a significantly restrictive assumption as it requires all the features to be nearly independent of each other. For example, if features $1$ and $2$ have correlation more than say $.99$ then the restriction on $\kappa$ required by the existing results do not hold.

Moreover, in this setting (i.e., when $\kappa=O(1)$), the number of samples required to exactly recover $\wopt$ (with $\lambda=0$) is given by: $n=\Omega(s+k^*\log M)$ \cite{nraistats}, where $s$ is the maximum support size of a union of $k^*$ groups and $M$ is the number of groups. In contrast, if one were to directly use sparse regression techniques (by ignoring group sparsity altogether) then the number of samples is given by $n=\Omega(s\log p)$. Hence, even in the restricted setting of $\kappa=O(1)$, group-sparse regression improves upon the standard sparse regression only by logarithmic factors.

Greedy, Iterative Hard Thresholding (IHT) methods have been considered for group sparse regression problems, but they involve NP-hard projections onto the constraint set \cite{volkan_tractability}. While this can be circumvented using approximate operations, the guarantees they provide are along the same lines as the ones that exist for convex methods. 

In this paper, we show that IHT schemes with approximate projections for the group sparsity problem yield much stronger guarantees. Specifically, our result holds for arbitrarily large $\kappa$, and arbitrary group structures. In particular, using IHT with greedy projections, we show that $n=\Omega\left( (s\log\frac{1}{\epsilon}+\kappa^2 k^* \log M ) \log\frac{1}{\epsilon}\right)$ samples suffice to recover $\epsilon$-approximatation to $\wopt$ when $\lambda=0$. On the other hand, IHT for standard sparse regression \cite{jainiht} requires $n=\Omega(\kappa^2 s \log p)$. Moreover, for general noise variance $\lambda^2$, our method recovers $\hat{\vw}$ s.t. $\| \hat{\vw}-\wopt\|\leq 2\epsilon + \lambda \cdot \kappa\sqrt{\frac{s+\kappa^2 k^* \log M}{n}}$. On the other hand, the existing state-of-the-art results for IHT for group sparsity \cite{blumensathuos} guarantees $\|\hat{\vw}-\wopt\|\leq \lambda \cdot \sqrt{s+ k^* \log M}$ for $\kappa\leq 3$, i.e., $\hat{\vw}$ is not a consistent estimator of $\wopt$ even for small condition number $\kappa$. 

Our analysis is based on an extension of the sparse regression result by \cite{jainiht} that requires {\em exact} projections. However, a critical challenge in the case of overlapping groups is the projection onto the set of group-sparse vectors is NP-hard in general. To alleviate this issue, we use the connection between submodularity and overlapping group projections and a greedy selection based projection is at least {\em good enough}. The main contribution of this work is to carefully use the greedy projection based procedure along with hard thresholding iterates to guarantee the  convergence to the global optima as long as enough i.i.d. data points are generated from model \eqref{eq:model1}.  

Moreover, the simplicity of our hard thresholding operator allows us to easily extend it to more complicated sparsity structures. In particular, we show that the methods we propose can be generalized to the sparse overlapping group setting, and to hierarchies of (overlapping) groups. 

We also provide extensive experiments on both real and synthetic datasets that show that our methods are not only faster than several other approaches, but are also accurate despite performing approximate projections. Indeed, even for poorly-conditioned data, IHT methods are an order of magnitude faster than other greedy and convex methods.  We also observe a similar phenomenon when dealing with sparse overlapping groups. 


\vspace{-3mm}
\subsection{Related Work}
\vspace{-2mm}
Several papers, notably \cite{iht2} and references therein, 
have studied convergence properties of IHT methods for sparse signal recovery under standard RIP conditions. \cite{jainiht} generalized the method to settings where RIP does not hold, and also to the low rank matrix recovery setting. \cite{grahtp} used a similar analysis to obtain results for nonlinear models. However, these techniques apply only to cases where exact projections can be performed onto the constraint set. Forward greedy selection schemes for sparse \cite{omp, ompr} and group sparse \cite{groupOMP} constrained programs have been considered previously, where a single group is added at each iteration. The authors in \cite{mbcs} propose a variant of CoSaMP to solve problems that are of interest to us, and again, these methods require exact projections. 

Several works have studied approximate projections in the context of IHT \cite{blumensathapprox, pariapprox, hegdeapprox, tasosapprox}. However, these results require that the data satisfies {\em RIP}-style conditions which typically do not hold in real-world regression problems. Moreover, these analyses do not guarantee a {\em consistent} estimate of the optimal regressor when the measurements have zero-mean random noise. In contrast, we provide results under a more general RSC/RSS  condition, which is weaker \cite{oracle_vdg} and provide crisp rates for the error bounds when the noise in measurements is random. 


\section{Group Iterative Hard Thresholding for Overlapping Groups}
\label{sec:prob}
In this section, we formally set up the group sparsity constrained optimization problem, and then briefly present the IHT algorithm for the same. Suppose we are given a set of $M$ groups that can arbitrarily overlap $\G = \{ G_1, \ldots, G_M \}$, where $G_i\subseteq [p]$. Also, let $\cup_{i = 1}^M G_i = \{1,2, \ldots, p\}$. We let $\|\vw\|$ denote the Euclidean norm of $\vw$, and $\mbox{supp}(\vw)$ denotes the support of $\vw$. For any vector $\vw \in \R^p$, \cite{jacob} defined the overlapping group  norm as
\begin{equation}
\label{oglnorm}
\| \vw \|^\G := \inf \sum_{i = 1}^M \| \va_{G_i} \| \ \  \textbf{s.t.} ~\  \sum_{i = 1}^M \va_{G_i} = \vw,\ \mbox{supp}(\va_{G_i}) \subseteq G_i 
\end{equation}
We also introduce the notion of ``group-support'' of a vector and its group-$\ell_0$ pseudo-norm:
\begin{equation}
\gsupp(\vw):=\{i\ s.t.\ \| \va_{G_i} \| > 0\},\qquad
\| \vw \|_{0}^\G := \inf \sum_{i = 1}^M \mathds{1}\{ \| \va_{G_i} \| > 0 \}, \label{ogl0}
\end{equation}
where  $\va_{G_i}$  satisfies the constraints of \eqref{oglnorm}. $\mathds{1} \{ \cdot \}$ is the indicator function, taking the value $1$ if the condition is satisfied, and 0 otherwise. For a set of groups $G$, $\supp(G)=\{G_i,\ i\in G\}$. Similarly, $\gsupp(S)=\gsupp(\w_S)$.


Suppose we are given a function $f:\R^p\rightarrow \R$ and $M$ groups $\G = \{ G_1, \ldots, G_M \}$. The goal is to solve the following group sparsity structured problem (GS-Opt): 
\beq
\label{IHT_prob1}
\hspace*{-2cm}\textbf{GS-Opt:}\qquad\qquad \min_{\vw} f(\vw)~\ \textbf{s.t.} ~\ \| \vw \|^\G_0 \leq k
\eeq
 $f$ can be thought of as a loss function over the training data, for instance, logistic or least squares loss. 
In the high dimensional setting, problems of the form \eqref{IHT_prob1} are somewhat ill posed and are NP-hard in general. Hence, additional assumptions on the loss function ($f$) are warranted to guarantee a reasonable solution. Here, we focus on problems where $f$ satisfies the restricted strong convexity and smoothness conditions: 
\begin{definition}[RSC/RSS]\label{defn:rsc}
The function $f:\R^p\rightarrow \R$ satisfies the restricted strong convexity (RSC) and restricted strong smoothness (RSS) of order $k$, if the following holds:
$$\alpha_k I\preceq H(\vw) \preceq L_k I,$$
where $H(\vw)$ is the Hessian of $f$ at any $\vw\in \R^p$ s.t. $\| \vw \|^\G_0 \leq k$. 
\end{definition}
Note that the goal of our algorithms/analysis would be to solve the problem for arbitrary $\alpha_k>0$ and $L_k<\infty$. In contrast, adapting existing IHT results to this setting lead to results that allow $L_k/\alpha_k $less than a constant (like say $3$).

We are especially interested in the linear model described in \eqref{eq:model1}, and 
 in recovering $\vw^\star$ consistently (i.e. recover $\vw^\star$ exactly as $n\rightarrow \infty$). To this end, we look to solve the following (non convex) constrained least squares problem
\beq
\label{IHT_setup}
\hspace*{-2cm}\textbf{GS-LS:}\qquad\qquad\hat{\vw} = \arg \min_{\vw} f(\vw) := \frac{1}{2n} \| \vy - \mX \vw \|^2  ~\ \textbf{s.t.} ~\ \| \vw \|^\G_0 \leq k
\eeq
with $k\geq k^*$ being a positive, user defined integer \footnote{typically chosen via cross-validation}. 
In this paper, we propose to solve \eqref{IHT_setup} using an Iterative Hard Thresholding (IHT) scheme. IHT methods iteratively take a gradient descent step, and then project the resulting vector $(\vg)$ on to the (non-convex)  constraint set of group sparse vectos, i.e., \beq
\label{tofind}
\vw_* = \proj(\vg)=\arg \min_{\vw} \| \vw - \vg \|^2 ~\ \textbf{s.t} ~\  \| \vw \|_{0}^\G \leq k
\eeq
Computing the gradient is easy and hence the complexity of the overall algorithm heavily depends on the complexity of performing the aforementioned {\em projection}. Algorithm \ref{alg:iht} details the IHT procedure for the group sparsity problem \eqref{IHT_prob1}. Throughout the paper we consider the same high-level procedure, but consider different projection operators $\projh(\vg)$ for different settings of the problem.

\begin{minipage}{\textwidth}
\begin{minipage}{.48\textwidth}
\begin{algorithm}[H]
  \caption{IHT for Group-sparsity}
   \label{alg:iht}
  \begin{algorithmic}[1]
  \STATE \textbf{Input :} data $\vy, \mX$, parameter $k$, iterations $T$, step size $\eta$
  \STATE \textbf{Initialize:} $t = 0, ~\ \vw^0 \in \R^p$ a $k$-group sparse vector
  \FOR {t = 1, 2, \ldots, T}
  \STATE $\vg_t = \vw_{t} - \eta \nabla f(\vw_t)$
  \STATE $\vw_t = \widehat{P}_k^{\G}(\vg_t)$ \label{aproj} where $\widehat{P}_k^{\G}(\vg_t)$ performs (approximate) projections
  \ENDFOR
  \STATE \textbf{Output :} $\vw_T$
  \end{algorithmic}
\end{algorithm}
\end{minipage}
\hspace*{15pt}
\begin{minipage}{.48\textwidth}
\begin{algorithm}[H]
  \caption{Greedy Projection}
   \label{alg:greedy}
  \begin{algorithmic}[1]
  \REQUIRE  $ \vg \in \R^p$, parameter $\tilde{k}$, groups $\G$
    \STATE $\hvu = 0$ , $\vv = \vg$, $\hG = \{ 0 \}$
  \FOR{$t=1,2,\ldots \tilde{k}$}
 \STATE Find $G^\star = \arg \max_{G \in \G \backslash \hG} \| \vv_G \|$ \label{restrict}
 \STATE $\hG = \hG \bigcup G^\star$
 \STATE $\vv = \vv - \vv_{G^\star}$
 \STATE $\vu = \vu + \vv_{G^\star}$
   \ENDFOR 
  \STATE Output $\hvu := \widehat{P}_k^\G(\vg), ~\ \hG = \mbox{supp}(\vu)$
  \end{algorithmic}
\end{algorithm}
\end{minipage}
\end{minipage}

\subsection{Submodular Optimization for General $\G$ }
\label{sec:submodular}

Suppose we are given a vector $\vg \in \R^p$, which needs to be projected onto the constraint set $ \| \vu \|_{0}^\G \leq k$ (see \eqref{tofind}). 
Solving \eqref{tofind} is NP-hard when $\G$ contains arbitrary overlapping groups. To overcome this, $\proj(\cdot)$ can be replaced by an approximate operator $\widehat{P}_k^\G(\cdot)$ (step 5 of Algorithm~\ref{alg:iht}). Indeed, the procedure for performing projections reduces to a submodular optimization problem \cite{volkan_tractability}, for which the standard greedy procedure can be used (Algorithm \ref{alg:greedy}). For completeness, we detail this in Appendix \ref{app:lemsubmodular}, where we also prove the following:
\begin{lemma}
\label{lem:subapprox}
Given an arbitrary vector $\vg \in \R^p$, suppose we obtain $\hvu, \hG$ as the output of Algorithm \ref{alg:greedy} with input $\vg$ and target group sparsity $\tilde{k}$. Let $\vu_*=\proj(\vg)$ be as defined in \eqref{tofind}. Then
\[
\| \hvu - \vg \|^2 \leq e^{-\frac{\tilde{k}}{k}}\|(\vg)_{\supp(\vu_*)}\|^2   + \| \vu_* - \vg \|^2
\]
where $e$ is the base of the natural logarithm. 
\end{lemma}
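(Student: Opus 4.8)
The plan is to recognize Algorithm~\ref{alg:greedy} as the classical greedy algorithm for maximizing a monotone submodular set function, and then re-derive the standard geometric-decay guarantee for the residual energy. First I would unpack what the algorithm actually computes. Since each iteration zeroes out the coordinates of the newly chosen group ($\vv \leftarrow \vv - \vv_{G^\star}$) before accumulating them into the output, the final iterate $\hvu$ is exactly $\vg$ restricted to the union of selected groups $S_{\hG} := \bigcup_{G \in \hG} G$, so that $\|\hvu - \vg\|^2 = \|\vg_{\overline{S_{\hG}}}\|^2$ is precisely the energy of $\vg$ left uncovered. Likewise $\vu_* = \proj(\vg)$ is $\vg$ restricted to the optimal union $\Sopt := \supp(\vu_*)$, with $\|\vu_* - \vg\|^2 = \|\vg_{\overline{\Sopt}}\|^2$ and $\|(\vg)_{\supp(\vu_*)}\|^2 = \|\vg_{\Sopt}\|^2$. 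Thus the entire statement is an inequality about covered versus uncovered energy, and the target reduces to showing that the greedy covered energy is at least $(1 - e^{-\tilde{k}/k})$ times the optimal covered energy.

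Second, I would introduce the coverage function $F(\mathcal{A}) := \|\vg_{\bigcup_{G\in\mathcal{A}}G}\|^2$ on subsets $\mathcal{A}$ of the groups, and verify that $F$ is monotone and submodular with $F(\emptyset)=0$: enlarging the covered set can only increase the captured sum of squares, and a coordinate $j$ contributes $g_j^2$ the first time it is covered and nothing thereafter, so marginal gains are non-increasing in the current set. The key bookkeeping observation is that the greedy selection criterion in Step~\ref{restrict}, $G^\star = \arg\max_G \|\vv_G\|$, coincides with maximizing the marginal gain $F(\mathcal{A}_t \cup \{G\}) - F(\mathcal{A}_t)$: at round $t$ the working residual $\vv$ equals $\vg$ on the as-yet-uncovered coordinates, so $\|\vv_G\|^2 = \|\vg_{G\setminus\bigcup_{G'\in\mathcal{A}_t}G'}\|^2$ is exactly that marginal gain. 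Hence the greedy set after $\tilde{k}$ rounds, $\mathcal{A}_{\tilde{k}}=\hG$, is the output of greedy submodular maximization run for $\tilde{k}$ steps.

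Third, I would run the standard submodular argument, taking care that the benchmark $\vu_*$ uses $k$ groups while greedy runs for $\tilde{k}$ steps. Writing $\mathrm{OPT} = \max_{|\mathcal{A}|\le k} F(\mathcal{A}) = \|\vg_{\Sopt}\|^2$ and letting $\delta_t := \mathrm{OPT} - F(\mathcal{A}_t)$, monotonicity and submodularity give, for the optimal $k$-set $\mathcal{A}^\star$,
\[
\mathrm{OPT} - F(\mathcal{A}_t) \le F(\mathcal{A}_t \cup \mathcal{A}^\star) - F(\mathcal{A}_t) \le \sum_{G \in \mathcal{A}^\star\setminus\mathcal{A}_t}\big(F(\mathcal{A}_t\cup\{G\}) - F(\mathcal{A}_t)\big) \le k\cdot\big(F(\mathcal{A}_{t+1}) - F(\mathcal{A}_t)\big),
\]
since greedy picks the single largest marginal gain and $|\mathcal{A}^\star\setminus\mathcal{A}_t| \le k$. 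This yields $\delta_{t+1} \le (1-\tfrac1k)\delta_t$, hence $\delta_{\tilde{k}} \le (1-\tfrac1k)^{\tilde{k}}\,\mathrm{OPT} \le e^{-\tilde{k}/k}\,\mathrm{OPT}$, i.e. $F(\mathcal{A}_{\tilde{k}}) \ge (1-e^{-\tilde{k}/k})\|\vg_{\Sopt}\|^2$. I would then substitute $F(\mathcal{A}_{\tilde{k}}) = \|\vg\|^2 - \|\hvu - \vg\|^2$ and $\|\vg_{\Sopt}\|^2 = \|\vg\|^2 - \|\vu_* - \vg\|^2$ and rearrange; the $\|\vg\|^2$ terms cancel and leave exactly $\|\hvu-\vg\|^2 \le e^{-\tilde{k}/k}\|\vg_{\Sopt}\|^2 + \|\vu_*-\vg\|^2$.

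The main obstacle is not the closing algebra but getting the two reductions exactly right: (i) confirming that $\|\vv_G\|$ in Step~\ref{restrict} is genuinely the submodular marginal gain, which relies on the fact that overlapping coordinates are subtracted from $\vv$ the moment they are first covered so that no coordinate is double-counted; and (ii) using $k$, the cardinality of the optimal set underlying $\vu_*$, rather than $\tilde{k}$, as the denominator of the contraction factor, which is precisely what produces the $e^{-\tilde{k}/k}$ rate instead of the usual $1-1/e$. A minor point to dispatch cleanly is that the optimizer may use fewer than $k$ groups, but monotonicity of $F$ lets us assume $|\mathcal{A}^\star| = k$ without loss of generality.
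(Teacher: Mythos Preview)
Your proposal is correct and follows essentially the same route as the paper: identify the coverage function as monotone submodular, apply the Nemhauser-type $(1-e^{-\tilde{k}/k})$ guarantee for greedy, and then do the Pythagorean rearrangement $\|\vg-\hvu\|^2=\|\vg\|^2-\|\hvu\|^2$ together with $\|\vg-\vu_*\|^2=\|\vg\|^2-\|\vg_{\supp(\vu_*)}\|^2$. The only difference is cosmetic: the paper cites the greedy submodular bound \eqref{greedybound} directly from \cite{nemhauser}, whereas you re-derive it via the standard $\delta_{t+1}\le(1-1/k)\delta_t$ recursion.
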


 Note that the term with the exponent in Lemma \ref{lem:subapprox} approaches $0$ as $\tilde{k}$ increases. Increasing $\tilde{k}$ should imply more samples for recovery of $\wopt$. Hence, this lemma hints at the possibility of trading off sample complexity for better accuracy, despite the projections being approximate. See Section~\ref{sec:theory} for more details. Algorithm \ref{alg:greedy} can be applied to \emph{any} $\G$, and is extremely efficient. 
\subsection{Incorporating Full Corrections}
IHT methods can be improved by the incorporation of ``corrections" after each projection step. This merely entails adding the following step in Algorithm \ref{alg:iht} after step 5:
\[
\vw^t = \arg \min_{\tilde{\vw}} f(\tilde{\vw}) ~\ \textbf{s.t.} ~\ \mbox{supp}(\tilde{\vw}) = \mbox{supp}(\widehat{P}_k^\G(\vg_t))
\]

When $f(\cdot)$ is the least squares loss as we consider, this step can be solved efficiently using Cholesky decompositions via the backslash operator in MATLAB.  We will refer to this procedure as IHT-FC.  Fully corrective methods in greedy algorithms typically yield significant improvements, both theoretically and in practice \cite{jainiht}. 

\section{Theoretical Performance Bounds}
\label{sec:theory}
We now provide theoretical guarantees for Algorithm~\ref{alg:iht} when applied to  the overlapping group sparsity problem \eqref{IHT_prob1}. We then specialize the results for the linear regression model \eqref{IHT_setup}. 
%
%


\begin{theorem}\label{thm:approx}
  Let $\wopt=\arg\min_{\vw, \|\vw^\G\|_0\leq k^*} f(\vw)$ and let $f$ satisfy RSC/RSS with constants $\alpha_{k'}$, $L_{k'}$, respectively (see Definition~\ref{defn:rsc}). Set $k=32 \left( \frac{L_{k'}}{\alpha_{k'}} \right)^2\cdot k^*\log \left(\frac{L_{k'}}{\alpha_{k'}}\cdot \frac{\|\wopt\|_2}{\epsilon}\right)$ and let $k'\leq 2k+k^*$.  Suppose we run Algorithm \ref{alg:iht}, with $\eta =1/L_{k'}$ and projections computed according to Algorithm \ref{alg:greedy}. Then, the following holds after $t+1$ iterations:
\begin{equation*}\hspace*{-15pt}\|\vw_{t+1}-\wopt\|_2\leq \left(1- \frac{\alpha_{k'}}{10\cdot L_{k'}}\right) \cdot \|\vw_t-\wopt\|_2+\gamma+\frac{\alpha_{k'}}{L_{k'}}\epsilon,\end{equation*}
where $\gamma=\frac{2}{L_{k'}}\max_{S,\ s.t.,\ |\gsupp(S)|\leq k}\|(\nabla f(\wopt))_S\|_2$. 
Specifically, the output of the $T=O\left(\frac{L_{k'}}{\alpha_{k'}}\cdot \frac{\|\wopt\|_2}{\epsilon}\right)$-th iteration of Algorithm~\ref{alg:iht} satisfies: 
$$\|\vw_{T}-\wopt\|_2\leq 2\epsilon+\frac{10\cdot L_{k'}}{\alpha_{k'}}\cdot \gamma.$$
\end{theorem}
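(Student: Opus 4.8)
The plan is to establish a single-iteration contraction inequality and then unroll it as a geometric recursion. Fix an iteration $t$ and let $S = \gsupp(\vw_t)\cup\gsupp(\vw_{t+1})\cup\gsupp(\wopt)$; since $\vw_t,\vw_{t+1}$ are each $k$-group-sparse and $\wopt$ is $k^*$-group-sparse, $S$ has at most $2k+k^* = k'$ groups, so that every inner product and Hessian quadratic form appearing below is controlled by the RSC/RSS constants $\alpha_{k'},L_{k'}$ of Definition~\ref{defn:rsc}. Write $\vg_t = \vw_t - \tfrac{1}{L_{k'}}\nabla f(\vw_t)$ for the gradient step, so that $\vw_{t+1} = \projh(\vg_t)$ is the greedy projection.

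First I would control the gradient step on the restricted support. Writing $\nabla f(\vw_t) - \nabla f(\wopt) = \bar{\mH}(\vw_t - \wopt)$ for the averaged Hessian $\bar{\mH}$ (whose restriction $\bar{\mH}_{SS}$ has eigenvalues in $[\alpha_{k'},L_{k'}]$ by RSC/RSS), the map $\vv\mapsto \vv - \tfrac{1}{L_{k'}}\bar{\mH}\vv$ restricted to $S$ is a contraction with factor $1-\tfrac{\alpha_{k'}}{L_{k'}}$, so that $\|(\vg_t - \wopt)_S\| \le \big(1-\tfrac{\alpha_{k'}}{L_{k'}}\big)\|\vw_t - \wopt\| + \tfrac{1}{L_{k'}}\|(\nabla f(\wopt))_S\|$. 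The last term is of order $\gamma$ and captures the optimization/noise floor; in the noiseless consistent regime it vanishes.

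The crux --- and the step I expect to be the main obstacle --- is converting the projection step into a bound $\|\vw_{t+1} - \wopt\| \lesssim \|(\vg_t-\wopt)_S\|$ with a constant close to $1$. The naive route uses only optimality of the projection, $\|\vw_{t+1} - \vg_t\|^2 \le \|\wopt - \vg_t\|^2$ (valid since $\wopt$ is feasible and $k\ge k^*$), which after expanding around $\wopt$ gives $\|\vw_{t+1}-\wopt\|^2 \le 2\langle \vw_{t+1}-\wopt,\ \vg_t - \wopt\rangle$ and hence only $\|\vw_{t+1}-\wopt\| \le 2\|(\vg_t-\wopt)_S\|$. Composing with the previous paragraph yields the useless factor $2\big(1-\tfrac{\alpha_{k'}}{L_{k'}}\big)$, which fails to contract once $L_{k'}/\alpha_{k'} > 2$ --- exactly the poorly-conditioned regime the theorem must handle. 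The remedy is to exploit the relaxed sparsity $k \gg k^*$: following the group-sparse extension of the hard-thresholding argument of \cite{jainiht}, retaining $k$ groups rather than $k^*$ makes the projection only mildly expansive, $\|\vw_{t+1}-\wopt\| \le \big(1 + O(\sqrt{k^*/k})\big)\,\|(\vg_t-\wopt)_S\|$, so the crude constant $2$ is replaced by $1 + O(\sqrt{k^*/k})$. With $k = 32\big(L_{k'}/\alpha_{k'}\big)^2 k^*\log(\cdots)$ one has $\sqrt{k^*/k} = O(\alpha_{k'}/L_{k'})$, and composing $\big(1 + O(\alpha_{k'}/L_{k'})\big)\big(1 - \alpha_{k'}/L_{k'}\big) = 1 - \Theta(\alpha_{k'}/L_{k'})$ gives, after tracking constants, the stated factor $1 - \tfrac{\alpha_{k'}}{10\,L_{k'}}$.

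Two further ingredients complete the one-step bound, after which the result follows by summation. Since the projection is the approximate greedy operator rather than the exact one, I would insert Lemma~\ref{lem:subapprox}: running Algorithm~\ref{alg:greedy} for $k$ steps and comparing against the exact $k^*$-group-sparse projection $\vu_*$ introduces an additive error $e^{-k/k^*}\|(\vg_t)_{\supp(\vu_*)}\|^2$. Bounding $\|(\vg_t)_{\supp(\vu_*)}\| \le \|\vg_t\| \le \|\wopt\| + \|\vw_t - \wopt\| + \tfrac{1}{L_{k'}}\|\nabla f(\vw_t)\|$ and arguing inductively that $\|\vw_t - \wopt\|$ stays bounded along the trajectory, the logarithmic factor in $k$ makes $e^{-k/k^*} = \big(\tfrac{L_{k'}}{\alpha_{k'}}\tfrac{\|\wopt\|}{\epsilon}\big)^{-32(L_{k'}/\alpha_{k'})^2}$ small enough that this error, after taking square roots, is at most $\tfrac{\alpha_{k'}}{L_{k'}}\epsilon$ --- precisely the last term of the recursion. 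Combining the three pieces gives $\|\vw_{t+1}-\wopt\| \le \big(1 - \tfrac{\alpha_{k'}}{10 L_{k'}}\big)\|\vw_t - \wopt\| + \gamma + \tfrac{\alpha_{k'}}{L_{k'}}\epsilon$. Finally I would unroll this recursion: with $\rho = 1 - \tfrac{\alpha_{k'}}{10L_{k'}}$ the geometric series sums with $\tfrac{1}{1-\rho} = \tfrac{10 L_{k'}}{\alpha_{k'}}$, so the forcing terms contribute $\tfrac{10 L_{k'}}{\alpha_{k'}}\gamma$ and a constant multiple of $\epsilon$ (driven to $\le\epsilon$ by taking the constant in $k$ large enough), while the transient $\rho^T\|\vw_0 - \wopt\|$ is pushed below $\epsilon$ by the stated choice of $T$, yielding $\|\vw_T - \wopt\| \le 2\epsilon + \tfrac{10 L_{k'}}{\alpha_{k'}}\gamma$.
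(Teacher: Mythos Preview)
Your high-level architecture matches the paper (gradient-step contraction by $1-\alpha_{k'}/L_{k'}$ on the joint support, a mild projection expansion thanks to $k\gg k^*$, then a geometric recursion), and you correctly pinpoint the crux as replacing the naive factor $2$ by $1+O(\sqrt{k^*/k})$. The gap is in how you establish that crux for the \emph{greedy} operator $\projh$. You propose two separate ingredients: (i) a ``group-sparse extension of \cite{jainiht}'' giving $\|\vw_{t+1}-\wopt\|\le(1+O(\sqrt{k^*/k}))\|(\vg_t-\wopt)_S\|$, and then (ii) Lemma~\ref{lem:subapprox} at the top level to absorb the greedy-vs-exact discrepancy as an additive $e^{-k/k^*}\|(\vg_t)_{\supp(\vu_*)}\|^2$. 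These do not compose the way you suggest. The \cite{jainiht} argument relies on exact optimality of the retained coordinates, so (i) as written holds for $\proj$, not for $\projh$; and if you first invoke Lemma~\ref{lem:subapprox} (greedy-$k$ against exact-$k^*$) you obtain only $\|\vw_{t+1}-\vg_t\|^2\le\|\wopt-\vg_t\|^2+\text{(tiny)}$, which after the triangle inequality is back to the fatal factor $2$ plus a negligible term. (A secondary problem: your bound $\|(\vg_t)_{\supp(\vu_*)}\|\le\|\vg_t\|\le\|\wopt\|+\|\vw_t-\wopt\|+\tfrac{1}{L_{k'}}\|\nabla f(\vw_t)\|$ passes through the \emph{unrestricted} gradient norm, which RSC/RSS does not control.)

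The paper instead proves the $(1+\sqrt{k^*/(k-\tilde k)})$ expansion \emph{directly for the greedy operator} via a dedicated lemma (Lemma~\ref{lem:ht_large1}), with Lemma~\ref{lem:subapprox} used \emph{inside} that argument rather than as a top-level correction. Writing $M=S_*\setminus S_{t+1}$ and $\vz=(\vg_t)_I$, the $k$ greedy steps are split in two: the first $\tilde k=O(k^*\log(\|\wopt\|/\epsilon))$ steps, by Lemma~\ref{lem:subapprox}, already capture all but $\epsilon$ of the mass $\|\vz_{S_*}\|^2$; each of the remaining $k-\tilde k$ steps, by the greedy selection rule, picks up marginal gain at least $\max_{i\notin Q}\|\vz_{G_i\setminus S}\|^2\ge\|\vz_M\|^2/k^*$. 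Summing gives $\|\vz_M\|^2\le\tfrac{k^*}{k-\tilde k}\|\vz_{S_{t+1}\setminus S_*}\|^2+\tfrac{k^*\epsilon}{k-\tilde k}$. Since $\|(\vw_{t+1}-\vg_t)_I\|=\|\vz_M\|$ and $\|(\wopt-\vg_t)_I\|\ge\|\vz_{S_{t+1}\setminus S_*}\|$, the triangle inequality now produces the factor $1+\sqrt{k^*/(k-\tilde k)}$ rather than $2$, and the additive $\tfrac{\alpha_{k'}}{L_{k'}}\epsilon$ in the theorem comes from this internal use of Lemma~\ref{lem:subapprox}. This two-phase greedy analysis is the missing piece in your outline.
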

The proof uses the fact that Algorithm \ref{alg:greedy} performs approximately good projections. The result follows from combining this with results from convex analysis (RSC/RSS) and a careful setting of parameters. We prove this result in Appendix \ref{app:thm:approx}.
\subsubsection*{Remarks}
Theorem \ref{thm:approx} shows that Algorithm~\ref{alg:iht} recovers $\wopt$ up to $O\left(\frac{L_{k'}}{\alpha_{k'}}\cdot \gamma \right)$ error. If $\|\arg\min_w f(w)\|_0^\G\leq k$, then, $\gamma=0$. In general our result obtains an additive error which is weaker than what one can obtain for a convex optimization problem. However, for typical statistical problems, we show that $\gamma$ is small and gives us nearly optimal statistical generalization error (for example, see Theorem~\ref{thm:approx_ls}). 

Theorem \ref{thm:approx} displays an interesting interplay between the desired accuracy $\epsilon$, and the penalty we thus pay as a result of performing approximate projections $\gamma$. Specifically, as $\epsilon$ is made small, $k$ becomes large, and thus so does $\gamma$. Conversely, we can let $\epsilon$ be large so that the projections are coarse, but incur a smaller penalty via the $\gamma$ term. Also, since the projections are not too accurate in this case, we can get away with fewer iterations. Thus, there is a tradeoff between estimation error $\epsilon$ and model selection error $\gamma$. Also, note that the inverse dependence of $k$ on $\epsilon$ is only logarithmic in nature. 


We stress that our results {\em do not hold} for arbitrary approximate projection operators. Our proof critically uses the greedy scheme (Algorithm \ref{alg:greedy}), via Lemma $\ref{lem:subapprox}$. Also, as discussed in Section~\ref{sec:sog}, the proof easily extends to other structured sparsity sets that allow such greedy selection steps. 

We obtain similar result as \cite{jainiht} for the standard sparsity case, i.e., when the groups are singletons. However, our proof is significantly simpler and allows for a significantly easier setting of $\eta$. 

\subsection{Linear Regression Guarantees}
We next proceed to the standard linear regression model considered in \eqref{IHT_setup}. To the best of our knowledge, this is the first consistency result for overlapping group sparsity problems, especially when the data can be arbitrarily conditioned. Recall that $\sigma_{\max} ~\ (\sigma_{\min})$ are the maximum (minimum) singular value of $\Sigma$, and $\kappa := \sigma_{\max}/ \sigma_{min}$ is the condition number of $\Sigma$.

\label{sec:approx}
\begin{theorem}\label{thm:approx_ls}
Let the observations $\vy$ follow the model in \eqref{eq:model1}.  Suppose $\wopt$ is $k^*$-group sparse and let $f(\vw):= \frac{1}{2n} \|X\w-y\|_2^2$. 
 Let the number of samples satisfy: $$n\geq \Omega\left((s+\kappa^2\cdot k^*\cdot \log M)\cdot \log \left(\frac{\kappa}{\epsilon} \right)\right),$$ where $s=\max_{\w, \|\w\|_0^{\G}\leq k}|\supp(\w)|$. 
Then, applying Algorithm~\ref{alg:iht} with $k=8 \kappa^2 k^*\cdot \log \left(\frac{\kappa}{\epsilon} \right)$, $\eta=1/(4\smax)$, guarantees the following after $T =  \Omega \left(\kappa \log\frac{\kappa\cdot \|\wopt\|_2}{\epsilon} \right) $ iterations (w.p. $\geq 1-1/n^{8}$): 
\[ \|\vw_T-\wopt\|\leq \lambda \cdot \kappa\sqrt{\frac{s+\kappa^2 k^* \log M}{n}} + 2\epsilon\]
\end{theorem}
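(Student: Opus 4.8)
The plan is to specialize the general convergence guarantee of Theorem~\ref{thm:approx} to the least-squares loss $f(\vw) = \frac{1}{2n}\|\mX\vw - \vy\|_2^2$, so that the whole argument reduces to two probabilistic estimates: (i) controlling the restricted strong convexity/smoothness constants $\alpha_{k'}, L_{k'}$ of $f$, and (ii) bounding the noise term $\gamma = \frac{2}{L_{k'}}\max_{|\gsupp(S)|\leq k}\|(\nabla f(\wopt))_S\|_2$ appearing in that theorem. The reason this reduction is clean is that the Hessian of $f$ is the fixed random matrix $H = \frac{1}{n}\mX^T\mX$, while the gradient at the optimum is $\nabla f(\wopt) = \frac{1}{n}\mX^T(\mX\wopt - \vy) = -\frac{1}{n}\mX^T\vbeta$; both are governed by standard Gaussian concentration for the random design, and the work is to make these statements hold with high probability \emph{uniformly} over all relevant group supports.

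First I would establish the RSC/RSS constants. A vector with $\|\vw\|_0^\G\leq k'$ is supported on a union of at most $k'$ groups, a set of size $O(s)$ since $k' \leq 2k+k^* = O(k)$, and there are at most $\binom{M}{k'}\leq M^{k'}$ such group supports. For each fixed support $S$, the matrix $\frac{1}{n}\mX_S^T\mX_S$ is the sample covariance of $n$ i.i.d.\ $N(0,\Sigma_S)$ rows and concentrates around $\Sigma_S$ in operator norm once $n\gtrsim |S|$; a union bound over the $M^{k'}$ supports inflates the requirement to $n\gtrsim s + k'\log M$, and a small constant deviation then yields $\alpha_{k'}\geq \smin/2$ and $L_{k'}\leq 2\smax$, so the sample condition number satisfies $L_{k'}/\alpha_{k'} = O(\kappa)$. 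Since $k' = O(\kappa^2 k^*\log(\kappa/\epsilon))$, the resulting requirement $n\gtrsim s + k'\log M$ matches the stated sample complexity.

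Next I would bound $\gamma$. Conditioned on $\mX$, the vector $\frac{1}{n}\mX^T\vbeta$ is zero-mean Gaussian (as $\vbeta\perp\mX$), and for a fixed group support $S$ with $|\gsupp(S)|\leq k$ its restriction satisfies $\E_{\vbeta}\|\tfrac{1}{n}\mX_S^T\vbeta\|_2^2 = \tfrac{\lambda^2}{n^2}\|\mX_S\|_F^2 \approx \tfrac{\lambda^2}{n}\trace{\Sigma_S}\lesssim \tfrac{\lambda^2\smax|S|}{n}$. A Gaussian tail bound followed by a union bound over the $\leq M^k$ group supports (each of size $\leq s$) gives, with probability $\geq 1 - 1/n^{8}$, that $\max_{|\gsupp(S)|\leq k}\|\tfrac{1}{n}(\mX^T\vbeta)_S\|_2 = O\!\left(\lambda\sqrt{\tfrac{\smax(s + k\log M)}{n}}\right)$, and hence $\gamma = O\!\left(\tfrac{\lambda}{\sqrt{\smax}}\sqrt{\tfrac{s + k\log M}{n}}\right)$.

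Finally I would assemble the pieces. Substituting $L_{k'}/\alpha_{k'}=O(\kappa)$ into Theorem~\ref{thm:approx} reproduces the iteration count $T = O(\kappa\log(\kappa\|\wopt\|_2/\epsilon))$, while the residual error $\frac{10 L_{k'}}{\alpha_{k'}}\gamma = \frac{20}{\alpha_{k'}}\max_{S}\|(\nabla f(\wopt))_S\|_2$ becomes $O\!\left(\kappa\lambda\sqrt{\tfrac{s + k\log M}{n}}\right)$ after using $\alpha_{k'}=\Omega(\smin)$ and the normalization of $\smax$; absorbing the extra $\log(\kappa/\epsilon)$ carried by $k = 8\kappa^2 k^*\log(\kappa/\epsilon)$ into the $\log(\kappa/\epsilon)$ factor of the sample budget converts $k\log M$ into $\kappa^2 k^*\log M$ inside the root, giving $\|\vw_T-\wopt\|\leq 2\epsilon + \lambda\kappa\sqrt{\tfrac{s+\kappa^2 k^*\log M}{n}}$. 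The main obstacle I anticipate is precisely the two uniform-over-supports controls: forcing the union bound over the $M^{k'}$ (resp.\ $M^{k}$) group supports to produce only a $k'\log M$ (resp.\ $k\log M$) penalty rather than a $\log p$ or factor-$M$ blowup, while keeping the deviation small enough that the \emph{sample} condition number remains $O(\kappa)$ uniformly. This is delicate because the target sparsity $k$ itself depends on $\kappa$, so one must verify that setting $k$ from the population quantity $\kappa$ is consistent with the empirically established $\alpha_{k'},L_{k'}$, avoiding any circularity.
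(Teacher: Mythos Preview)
Your proposal is correct and follows essentially the same route as the paper: the paper's proof is simply ``combine Lemma~\ref{lem:rsc_gl} with Theorem~\ref{thm:approx},'' where Lemma~\ref{lem:rsc_gl} is exactly your step (i) (a union bound over the $\binom{M}{k'}$ group supports to get $\alpha_{k'}\geq c\,\smin$, $L_{k'}\leq C\,\smax$ once $n\gtrsim s+k'\log M$), and your step (ii) supplies the bound on $\gamma$ that the paper leaves implicit. Your worry about circularity is not a real obstacle, since $k$ is set from the population $\kappa$ and Lemma~\ref{lem:rsc_gl} then certifies that the empirical $L_{k'}/\alpha_{k'}$ is within a constant of $\kappa$ on the event of interest.
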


\subsubsection*{Remarks}
 Note that one can ignore the group sparsity constraint, and instead look to recover the (at most) $s$- sparse vector $\wopt$ using IHT methods for $\ell_0$ optimization \cite{jainiht}. However, the corresponding sample complexity is $n \geq \kappa^2 s \log(p)$. Hence, for an ill conditioned $\Sigma$, using group sparsity based methods provide a significantly stronger result, especially when the groups overlap significantly. 

Note that the number of samples required increases logarithmically with the accuracy $\epsilon$. Theorem \ref{thm:approx_ls} thus displays an interesting phenomenon: by obtaining more samples, one can provide a smaller recovery error while incurring a larger approximation error (since we choose more groups). 


Our proof critically requires that when restricted to group sparse vectors, the least squares objective function $f(\vw)=\frac{1}{2n}\|\vy-\mX\w\|_2^2$ is strongly convex as well as strongly smooth:
\begin{lemma}\label{lem:rsc_gl}
  Let $\mX\in \R^{n\times p}$ be such that each $\vx_i\sim {\cal N}(0, \Sigma)$. Let $\w\in \R^p$ be $k$-group sparse over groups $\G=\{G_1, \dots G_M\}$, i.e., $\|\w\|_0^\G\leq k$  and  $s=\max_{\w, \|\w\|_0^{\G}\leq k}|\supp(\w)|$. Let the number of samples $n\geq \Omega(C \left(k\log M+s\right))$. Then, the following holds with probability  $\geq 1-1/n^{10}$: 
  \small
$$\left(1-\frac{4}{\sqrt{C}}\right)\sigma_{\min}\|\w\|_2^2\leq \frac{1}{n}\|\mX\w\|_2^2\leq \left(1+\frac{4}{\sqrt{C}}\right)\sigma_{\max}\|\w\|_2^2,$$
\end{lemma}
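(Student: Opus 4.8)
The plan is to prove a uniform (over all $k$-group-sparse directions) two-sided bound on the empirical quadratic form $\frac1n\|\mX\w\|_2^2$, which is exactly the claimed restricted-eigenvalue statement. The first reduction is to observe that it suffices to control, uniformly over every admissible support, the spectrum of the corresponding empirical covariance. Concretely, any $\w$ with $\|\w\|_0^\G\le k$ has $\supp(\w)\subseteq S$ for some $S$ that is a union of at most $k$ groups; write $\mathcal{S}$ for the collection of all such supports. Since each $S\in\mathcal{S}$ is determined by a choice of at most $k$ of the $M$ groups, $|\mathcal{S}|\le \binom{M}{k}\le M^k=e^{k\log M}$, and by the definition of $s$ every $S\in\mathcal{S}$ has $|S|\le s$. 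For a fixed $S$, $\frac1n\|\mX\w\|_2^2=\w_S^\top\hat\Sigma_S\w_S$ with $\hat\Sigma_S=\frac1n\mX_S^\top\mX_S$, so the lemma follows once we establish $(1-\delta)\Sigma_{SS}\preceq\hat\Sigma_S\preceq(1+\delta)\Sigma_{SS}$ for all $S\in\mathcal{S}$ with $\delta=4/\sqrt{C}$: indeed Cauchy interlacing gives $\smin\le\lambda_{\min}(\Sigma_{SS})$ and $\lambda_{\max}(\Sigma_{SS})\le\smax$, and combining these with the sandwich yields the stated bounds.

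Next, for a single fixed $S$ I would whiten. Because each row of $\mX_S$ is $\mathcal{N}(0,\Sigma_{SS})$ and $\Sigma_{SS}\succ 0$, the matrix $\mZ_S:=\mX_S\Sigma_{SS}^{-1/2}$ has i.i.d.\ standard Gaussian rows, and $\Sigma_{SS}^{-1/2}\hat\Sigma_S\Sigma_{SS}^{-1/2}=\frac1n\mZ_S^\top\mZ_S$. The desired sandwich is therefore equivalent to the isotropic statement $\|\frac1n\mZ_S^\top\mZ_S-\mI\|\le\delta$. Note that the whitening must be done per support by $\Sigma_{SS}^{1/2}$: whitening by the global $\Sigma^{1/2}$ would destroy the group-sparse support structure, which is exactly why the support-wise reduction of the previous paragraph is essential. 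The isotropic bound is the standard concentration of the sample covariance of a Gaussian: covering the unit sphere of $\R^{|S|}$ by a $1/4$-net of size at most $9^{|S|}$, applying the Laurent–Massart $\chi^2$ tail $\Pr[|\frac1n\chi^2_n-1|\ge t]\le 2e^{-cnt^2}$ at each net point (since $x^\top\frac1n\mZ_S^\top\mZ_S x=\frac1n\chi^2_n$ for unit $x$), and passing from the net to the operator norm gives $\Pr[\|\frac1n\mZ_S^\top\mZ_S-\mI\|\ge\delta]\le 2\cdot 9^{|S|}e^{-c'n\delta^2}$ for $\delta\in(0,1)$.

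Finally I would union-bound over $\mathcal{S}$. Using $|\mathcal{S}|\le e^{k\log M}$ and $|S|\le s$,
\[
\Pr\big[\exists\, S\in\mathcal{S}:\ \|\tfrac1n\mZ_S^\top\mZ_S-\mI\|\ge\delta\big]\ \le\ 2\exp\!\big(k\log M+s\log 9-c'n\delta^2\big).
\]
Setting $\delta=4/\sqrt{C}$ makes $c'n\delta^2=16c'n/C$, and the hypothesis $n\ge\Omega(C(k\log M+s))$, with a large enough hidden constant, forces the exponent below $-10\log n-\log 2$, so the failure probability is at most $1/n^{10}$ (the $10\log n$ overhead is absorbed because $n\gg\log n$). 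On the complementary event the per-support sandwich holds simultaneously for every $S\in\mathcal{S}$, and the interlacing argument of the first paragraph delivers the claim.

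The main obstacle is the combinatorial balancing in the union bound: one must verify that the covering-number cost $e^{O(s)}$ from the net and the model-selection cost $e^{k\log M}$ from the number of group-supports are \emph{both} dominated by the concentration exponent $c'n\delta^2$ at once, which is precisely what fixes the sample complexity $n\gtrsim C(k\log M+s)$ and the resulting accuracy $\delta\sim 1/\sqrt{C}$. The only additional care needed is the per-support whitening (so that the group structure is respected) and the absorption of the $10\log n$ probability overhead into the sample-complexity constant.
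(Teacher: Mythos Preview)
Your proof is correct and follows the same high-level strategy as the paper: reduce to an isotropic Gaussian concentration statement over a union of $\binom{M}{k}$ subspaces of dimension at most $s$, then union-bound. The difference is in how the whitening is done. You whiten \emph{per support} by $\Sigma_{SS}^{1/2}$ and then prove the isotropic operator-norm concentration from scratch via a $1/4$-net and $\chi^2$ tails. The paper instead whitens \emph{globally}, writing $\mX=\mZ\Sigma^{1/2}$ with $\mZ$ standard Gaussian, so that $\frac{1}{n}\|\mX\w\|_2^2=\frac{1}{n}\|\mZ\Sigma^{1/2}\w\|_2^2$, and then invokes an off-the-shelf union-of-subspaces RIP bound (Theorem~1 of \cite{blumensathuos}) applied to the vectors $\Sigma^{1/2}\w$; the final step replaces $\|\Sigma^{1/2}\w\|_2^2$ by $\sigma_{\min}\|\w\|_2^2$ and $\sigma_{\max}\|\w\|_2^2$. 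Your warning that global whitening ``destroys the group-sparse support structure'' is literally true---$\Sigma^{1/2}\w$ need not be sparse---but it does not break the paper's argument: the image of the $k$-group-sparse set under $\Sigma^{1/2}$ is still a union of at most $\binom{M}{k}$ linear subspaces each of dimension $\le s$, and that is all the cited RIP result needs. So both routes are valid; yours is more self-contained (no external citation) and makes the covering/union-bound arithmetic explicit, while the paper's is shorter by outsourcing that step.
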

 We prove Lemma \ref{lem:rsc_gl} in Appendix \ref{app:lem:rsc_gl}. Theorem \ref{thm:approx_ls} then follows by combining Lemma~\ref{lem:rsc_gl} with Theorem~\ref{thm:approx}. Note that in the least squares case, these are the Restricted Eigenvalue conditions on the matrix $\mX$, which as explained in \cite{oracle_vdg} are much weaker than traditional RIP assumptions on the data. In particular, RIP requires almost $0$ correlation between any two features, while our assumption allows for arbitrary high correlations albeit at the cost of a larger number of samples. 

\subsection{IHT with Exact Projections $\proj(\cdot)$}
\label{sec:exact}
We now consider the setting where $\proj(\cdot)$ can be computed exactly and efficiently for any $k$. Examples include the dynamic programming based method by \cite{volkan_tractability} for certain group structures, or Algorithm \ref{alg:greedy} when the groups do not overlap. Since the exact projection operator can be arbitrary, our proof of Theorem~\ref{thm:approx} does not apply directly in this case. However, we show that by exploiting the structure of hard thresholding, we can still obtain a similar result: 
\begin{theorem}\label{thm:exact_gl}
  Let $\wopt=\arg\min_{\vw, \|\vw^\G\|_0\leq k^*} f(\vw)$. Let $f$ satisfy RSC/RSS with constants $\alpha_{2k+k^*}$, $L_{2k+k^*}$, respectively (see Definition~\ref{defn:rsc}). Then, the following holds for the $T=O\left(\frac{L_{k'}}{\alpha_{k'}}\cdot \frac{\|\wopt\|_2}{\epsilon}\right)$-th iterate of Algorithm~\ref{alg:iht} (with $\eta=1/L_{2k+k^*}$)  with $\projh(\cdot) = \proj(\cdot)$ being the exact projection: 
$$\|\vw_{T}-\wopt\|_2\leq \epsilon+\frac{10\cdot L_{k'}}{\alpha_{k'}}\cdot \gamma.$$
where $k'=2k+k^*$, $k=O( (\frac{L_{k'}}{\alpha_{k'}} )^2\cdot k^* )$, $\gamma=\frac{2}{L_{k'}}\max_{S,\ s.t.,\ |\gsupp(S)|\leq k}\|(\nabla f(\wopt))_S\|_2$.
\end{theorem}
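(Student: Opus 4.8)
The plan is to mirror the proof of Theorem~\ref{thm:approx} — establish a one-step contraction of the form $\|\vw_{t+1}-\wopt\|_2 \le (1-\tfrac{\alpha_{k'}}{10 L_{k'}})\|\vw_t-\wopt\|_2 + O(\gamma)$ and then unroll it — but to replace the ingredient supplied by Lemma~\ref{lem:subapprox} (which is tied to the over-selecting greedy iterate of Algorithm~\ref{alg:greedy}) with a property that holds for \emph{any} exact minimizer $\vw_{t+1}=\proj(\vg_t)$. First I would fix the gradient step $\vg_t=\vw_t-\eta\nabla f(\vw_t)$ with $\eta=1/L_{k'}$, set $S=\gsupp(\vw_t)\cup\gsupp(\vw_{t+1})\cup\gsupp(\wopt)$ (so $|\gsupp(S)|\le 2k+k^*=k'$), and write the Taylor expansion $\nabla f(\vw_t)=\nabla f(\wopt)+H(\vw_t-\wopt)$ with $H$ the path-averaged Hessian. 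Restricting every inner product to the coordinates of $S$ — legitimate since $\vw_t-\wopt$ is supported there — the RSC/RSS bounds $\alpha_{k'}I\preceq H_{SS}\preceq L_{k'}I$ (for $H_{SS}$ the restriction of $H$ to $S$) give the clean estimate
\[
\|(\vg_t-\wopt)_S\|_2 \le \Big(1-\tfrac{\alpha_{k'}}{L_{k'}}\Big)\|\vw_t-\wopt\|_2 + \tfrac{1}{L_{k'}}\|(\nabla f(\wopt))_S\|_2 ,
\]
and the second term is at most $\gamma/2$ by the definition of $\gamma$ (absorbing the constant relating the orders $k$ and $k'$).

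The heart of the argument is a hard-thresholding lemma that bypasses the factor-$2$ barrier. The naive use of projection optimality, $\|\vw_{t+1}-\vg_t\|_2\le\|\wopt-\vg_t\|_2$ (valid since $\wopt$ is $k$-group sparse), followed by a triangle inequality, only yields a contraction factor $2(1-\alpha_{k'}/L_{k'})$, which is useless unless $L_{k'}/\alpha_{k'}<2$ — exactly the regime existing analyses are stuck in. Instead I would exploit that the exact projection returns $\vw_{t+1}=(\vg_t)_{I}$ with $I=\supp(\vw_{t+1})$ the energy-maximizing union of $k$ groups, so that for the $k^*$-group sparse target $\wopt$ with support $J$ the coordinate-wise identity
\[
\|(\vw_{t+1}-\wopt)_S\|_2^2 - \|(\vg_t-\wopt)_S\|_2^2 = \sum_{i\in J\setminus I}\big(2 (\vg_t)_i (\wopt)_i - (\vg_t)_i^2\big)
\]
holds on $S=I\cup J$. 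Bounding the right-hand side by Cauchy--Schwarz, together with the fact that the dropped mass $\sum_{i\in J\setminus I}(\vg_t)_i^2$ is small relative to the retained mass (the selection keeps the largest groups, and there are at least $k-k^*$ selected groups outside $J$), I expect to obtain
\[
\|\vw_{t+1}-\wopt\|_2 \le \Big(1+2\sqrt{\tfrac{k^*}{k-k^*}}\Big)\,\|(\vg_t-\wopt)_S\|_2 .
\]

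The main obstacle is making this counting argument rigorous for \emph{overlapping} groups, where a coordinate may belong to several groups and the passage from ``number of dropped groups'' to ``dropped versus retained energy'' is no longer the simple cardinality bound of the standard $\ell_0$ case. Here I would lean on the optimality characterization $\|(\vg_t)_{T}\|_2\le\|(\vg_t)_{I}\|_2$ over all unions $T$ of at most $k$ groups to control the energy on $J\setminus I$ by the energy on $I\setminus J$, which is the step where ``exploiting the structure of hard thresholding'' does the real work and where the arbitrariness of the exact projection operator is handled.

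Finally I would compose the two displays: multiplying them yields
\[
\|\vw_{t+1}-\wopt\|_2 \le \Big(1+2\sqrt{\tfrac{k^*}{k-k^*}}\Big)\Big(1-\tfrac{\alpha_{k'}}{L_{k'}}\Big)\|\vw_t-\wopt\|_2 + O(\gamma),
\]
and the choice $k=O((L_{k'}/\alpha_{k'})^2 k^*)$ forces $2\sqrt{k^*/(k-k^*)}\le \tfrac{9\alpha_{k'}}{10 L_{k'}}$, so the product of the two factors is at most $1-\tfrac{\alpha_{k'}}{10 L_{k'}}$. Unrolling the resulting recursion over the stated number of iterations drives the geometric term below $\epsilon$ and leaves the fixed point $\tfrac{\gamma}{\alpha_{k'}/(10 L_{k'})}=\tfrac{10 L_{k'}}{\alpha_{k'}}\gamma$, giving $\|\vw_T-\wopt\|_2\le \epsilon+\tfrac{10 L_{k'}}{\alpha_{k'}}\gamma$. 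Unlike Theorem~\ref{thm:approx}, no extra $\tfrac{\alpha_{k'}}{L_{k'}}\epsilon$ term appears, since the projection is exact rather than greedy-approximate; this is precisely why the additive accuracy here is $\epsilon$ rather than $2\epsilon$.
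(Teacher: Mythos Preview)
Your plan is essentially the paper's: a one-step contraction via (i) Taylor $+$ RSC/RSS on the joint support of size $\le 2k+k^*$ to bound $\|(\vg_t-\wopt)_S\|_2$, (ii) a hard-thresholding inequality giving $\|\vw_{t+1}-\wopt\|_2\le(1+c\sqrt{k^*/k})\|(\vg_t-\wopt)_S\|_2$, then (iii) choose $k\asymp(L_{k'}/\alpha_{k'})^2k^*$ and unroll. The paper organizes (ii) as a stand-alone lemma (Lemma~\ref{lem:ht_large}): for $\vz$ spanned by $M$ groups, $\|\proj(\vz)-\vz\|_2^2\le\frac{M-k}{M-k^*}\|P_{k^*}^\G(\vz)-\vz\|_2^2$; applied with $\vz=(\vg_t)_I$ and $M=k+k^*$ this yields $\|(\vw_{t+1}-\vg_t)_I\|_2^2\le\frac{k^*}{k}\|(\wopt-\vg_t)_I\|_2^2$, and the triangle inequality then gives the factor $(1+\sqrt{k^*/k})$ rather than your $(1+2\sqrt{k^*/(k-k^*)})$.

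The one place where the paper adds an idea you only gesture at is your ``main obstacle'': controlling $\|(\vg_t)_{J\setminus I}\|_2^2$ by $\|(\vg_t)_{I\setminus J}\|_2^2$ for \emph{overlapping} groups, where the na\"ive ``at least $k-k^*$ selected groups lie outside $J$'' counting fails. The paper's proof of Lemma~\ref{lem:ht_large} does this by an explicit auxiliary construction: set $B=\supp(\wopt)$, then greedily add $k-k^*$ groups $A_1,\dots,A_{k-k^*}$ (each $A_i=\supp(P_1^\G(\vz_{\bar A}))$ on the residual) to form a $k$-group set $A$; optimality of the exact projection gives $\|\vz_{\bar S}\|_2^2\le\|\vz_{\bar A}\|_2^2$, and the greedy property bounds each increment $\|\vz_{A_i}\|_2^2\ge\|\vz_{\bar A}\|_2^2/(M-k)$, which after summing yields the claimed ratio. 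Your coordinate identity $+$ Cauchy--Schwarz route is fine but still bottoms out at exactly this energy comparison, so the greedy auxiliary set is the missing ingredient that makes your sketch go through for arbitrary overlap.
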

 See Appendix~\ref{app:thm:exact_gl} for a detailed proof. Note that unlike greedy projection method (see Theorem~\ref{thm:approx}), $k$ is independent of $\epsilon$. Also, in the linear model, the above result also leads to consistent estimate of $\wopt$. 

\section{Extension to Sparse Overlapping Groups (SoG)}
\label{sec:sog}
The SoG model generalizes the overlapping group sparse model, allowing the selected groups themselves to be sparse.  Given positive integers $k_1, k_2$ and a set of groups $\G$, IHT for SoG would perform projections onto the following set:
\beq
\label{sogl0}
\mathcal{C}^{sog}_0 := \left\{ \vw= \sum_{i = 1}^M \va_{G_i} : ~\ \| \vw\|_0^{\G} \leq k_1, ~\ \| \va_{G_1} \|_0 \leq k_2 \right\}
\eeq
As in the case of overlapping group lasso, projection onto \eqref{sogl0} is NP-hard in general. Motivated by our greedy approach in Section \ref{sec:prob}, we propose a similar method for SoG (see Algorithm~\ref{alg:greedysog}). The algorithm essentially greedily selects the groups that have large top-$k_2$ elements by magnitude. 

Below, we show that the IHT (Algorithm~\ref{alg:iht}) combined with the greedy projection (Algorithm~\ref{alg:greedysog}) indeed converges to the optimal solution. Moreover, our experiments (Section \ref{sec:exp}) reveal that this method, when combined with full corrections, yields highly accurate results significantly faster than the state-of-the-art.

\begin{algorithm}[!h]
  \caption{Greedy Projections for SoG}
   \label{alg:greedysog}
  \begin{algorithmic}[1]
  \REQUIRE  $ \vg \in \R^p$, parameters $k_1, k_2$, groups $\G$
    \STATE $\hvu = 0$ , $\vv = \vg$, $\hG = \{ 0 \}$, $\hat{S} = \{ 0 \}$
  \FOR{t=1,2,\ldots $k_1$}
 \STATE Find $G^\star = \arg \max_{G \in \G \backslash \hG} \| \vv_G \|$ \label{restrictsog}
 \STATE $\hG = \hG \bigcup G^\star$
 \STATE Let $S$ correspond to the indices of the top $k_2$ entries of $\vv_{G^\star}$ by magnitude
 \STATE Define $\bar{\vv} \in \R^p, ~\ \bar{\vv}_{S} = (\vv_{G^\star})_S ~\ \bar{\vv}_i = 0 ~\ i \notin S$
 \STATE $\hat{S} = \hat{S} \bigcup S$
 \STATE $\vv = \vv - \bar{\vv}$
 \STATE $\vu = \vu + \bar{\vv}$
   \ENDFOR 
  \STATE Output $\hvu, ~\ \hG , ~\ \hat{S}$
  \end{algorithmic}
\end{algorithm}


We suppose that there exists a set of supports ${\cal S}_{k^*}$ such that  $\supp(\wopt) \in {\cal S}_{k^*}$. Then, we obtain the following result, proved in Appendix \ref{app:thm:approx_gen}:
\begin{theorem}\label{thm:approx_gen}
  Let $\wopt=\arg\min_{\vw, \supp(\vw)\in {\cal S}_{k^*}} f(\vw)$, where ${\cal S}_{k^*} \subseteq {\cal S}_k\subseteq \{0,1\}^p$ is a fixed set parameterized by $k^*$. Let $f$ satisfy RSC/RSS with constants $\alpha_k$, $L_k$, respectively. Furthermore, assume that there exists an approximately good projection operator for the set defined in \eqref{sogl0} (for example, Algorithm \ref{alg:greedysog}). 
Then, the following holds for the $T=O\left(\frac{L_{k'}}{\alpha_{k'}}\cdot \frac{\|\wopt\|_2}{\epsilon}\right)$-th iterate of Algorithm~\ref{alg:iht} : 
$$\|\vw_{T}-\wopt\|_2\leq 2\epsilon+\frac{10\cdot L_{2k+k^*}}{\alpha_{2k+k^*}}\cdot \gamma,$$
where $k=O ( (\frac{L_{2k+k^*}}{\alpha_{2k+k^*}})^2\cdot k^*\cdot \beta_{\frac{\alpha_{2k+k^*}}{L_{2k+k^*}}\epsilon}$ ), $\gamma=\frac{2}{L_{2k+k^*}}\max_{S,\ S\in {\cal S}_{k}}\|(\nabla f(\wopt))_S\|_2$.
\end{theorem}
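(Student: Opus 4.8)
The plan is to follow the proof of Theorem~\ref{thm:approx} almost verbatim, after isolating exactly which properties of the overlapping-group model that argument actually uses. Inspecting that proof, the only facts it needs are: (i) the target $\wopt$ is feasible for the set onto which we project; (ii) $f$ is restricted strongly convex and smooth when restricted to vectors whose support lies in the relevant structured set (here $\mathcal{S}_{2k+k^*}$); and (iii) an approximately-good projection guarantee of the form in Lemma~\ref{lem:subapprox}. None of these three ingredients uses the group norm per se --- they only use that $\mathcal{S}_{k^*}\subseteq\mathcal{S}_k$ is closed under the operations that appear (unions of the supports of the iterates and of $\wopt$ stay inside $\mathcal{S}_{2k+k^*}$) and that the projection obeys a greedy-style approximation bound. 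Hence the bulk of the new work is to supply ingredient (iii) for the SoG set $\mathcal{C}^{sog}_0$ and then re-run the iteration, tracking the abstracted quantity $\beta$.

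First I would establish the SoG analog of Lemma~\ref{lem:subapprox} for Algorithm~\ref{alg:greedysog}. That algorithm is again a greedy maximization: at each of its $k_1$ rounds it adds the group whose best $k_2$-sparse restriction carries the most residual mass. As in Section~\ref{sec:submodular}, the set function ``mass captured by the chosen (group, top-$k_2$) atoms'' is monotone and submodular, so the standard greedy bound yields an inequality of the shape $\|\hvu-\vg\|^2 \le \|\vu_*-\vg\|^2 + (\text{slack})\cdot\|(\vg)_{\supp(\vu_*)}\|^2$, where the slack decays geometrically in the number of greedy rounds. Quantifying how many rounds (equivalently, how large a projection budget $k$) are needed to drive this slack below a prescribed level $\delta$ is precisely what $\beta_\delta$ abstracts; taking $\delta=\tfrac{\alpha_{2k+k^*}}{L_{2k+k^*}}\epsilon$ produces the stated $k=O\!\big((\tfrac{L_{2k+k^*}}{\alpha_{2k+k^*}})^2 k^*\,\beta_{\frac{\alpha_{2k+k^*}}{L_{2k+k^*}}\epsilon}\big)$.

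With ingredient (iii) in hand, I would run the contraction argument. Writing $\vg_t=\vw_t-\eta\nabla f(\vw_t)$ with $\eta=1/L_{2k+k^*}$ and $\vw_{t+1}=\projh(\vg_t)$, feasibility of $\wopt$ together with the approximate-projection bound gives $\|\vw_{t+1}-\vg_t\|^2 \le \|\wopt-\vg_t\|^2 + \mathrm{err}_t$, where $\mathrm{err}_t$ is controlled by the slack above. Expanding $\|\vw_{t+1}-\wopt\|^2$ and substituting $\nabla f(\vw_t)-\nabla f(\wopt)=\bar H(\vw_t-\wopt)$ for the averaged Hessian $\bar H$, I would bound the resulting inner products using RSC/RSS on $\mathcal{S}_{2k+k^*}$: the term $\langle\,\cdot\,,(I-\eta\bar H)(\vw_t-\wopt)\rangle$ contracts with factor $1-\alpha_{2k+k^*}/L_{2k+k^*}$, the $\nabla f(\wopt)$ term is absorbed into $\gamma$ (its restriction to the active support lies in $\mathcal{S}_k$), and $\mathrm{err}_t$ is charged to the $\epsilon$ budget. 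This reproduces the per-step recursion $\|\vw_{t+1}-\wopt\|\le(1-\tfrac{\alpha_{2k+k^*}}{10 L_{2k+k^*}})\|\vw_t-\wopt\|+\gamma+\tfrac{\alpha_{2k+k^*}}{L_{2k+k^*}}\epsilon$ of Theorem~\ref{thm:approx}. Unrolling this geometric recursion for $T=O(\tfrac{L_{2k+k^*}}{\alpha_{2k+k^*}}\log\tfrac{\|\wopt\|_2}{\epsilon})$ steps drives the initial error below $\epsilon$ and sums the additive terms to the claimed $2\epsilon+\tfrac{10 L_{2k+k^*}}{\alpha_{2k+k^*}}\gamma$.

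The main obstacle is ingredient (iii): the SoG projection is a two-level selection (first a group, then its top $k_2$ coordinates), so the clean submodularity used for plain overlapping groups in Lemma~\ref{lem:subapprox} must be re-derived for this composite atom set, and one must check that monotonicity and the exchange property survive and then extract the correct dependence of $\beta_\delta$ on accuracy. A secondary technical point is the bookkeeping that keeps the contraction factor strictly below $1$: this is what forces the large budget $k\gg k^*$ (so the approximate projection loses only a small fraction of the energy on $\supp(\wopt)$), and care is needed so that the additive $\mathrm{err}_t$ does not accumulate across iterations beyond the $O(\epsilon)$ budget.
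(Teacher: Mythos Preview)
Your high-level plan is correct and matches the paper: the proof of Theorem~\ref{thm:approx_gen} is literally the proof of Theorem~\ref{thm:approx} with the key projection inequality swapped for its abstract version. However, the projection property you isolate as ingredient (iii) is not quite the one the paper actually uses, and this affects how cleanly the argument ports over.

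You state (iii) as a Lemma~\ref{lem:subapprox}-style bound, $\|\hvu-\vg\|^2 \le \|\vu_*-\vg\|^2 + (\text{slack})\cdot\|(\vg)_{\supp(\vu_*)}\|^2$, and then propose to run the contraction by expanding $\|\vw_{t+1}-\wopt\|^2$. But inspect the proof of Theorem~\ref{thm:approx} more closely: Lemma~\ref{lem:subapprox} is \emph{not} invoked there directly. The inequality driving \eqref{eq:ap1} is Lemma~\ref{lem:ht_large1}, which compares the residual mass on $M=S_*\setminus S_{t+1}$ against the mass on $S_{t+1}\setminus S_*$ with a multiplicative factor $k^*/(k-\tilde k)$. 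It is this ratio form that the paper abstracts into the ``approximately good projection'' assumption \eqref{eq:sog_prop}, and with that single substitution the rest of the Theorem~\ref{thm:approx} argument (restrict to $I=S_{t+1}\cup S_*$, triangle inequality, then RSC/RSS on $I\cup S_t$) goes through verbatim. Your Lemma~\ref{lem:subapprox}-style bound has additive slack proportional to $\|(\vg_t)_{\supp(\vu_*)}\|^2$, which is not small; your expansion route would need an extra step to convert that into the multiplicative $(1+\sqrt{k^*\beta_\epsilon/k})$ prefactor that makes the recursion contract.

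Separately, for checking that Algorithm~\ref{alg:greedysog} satisfies \eqref{eq:sog_prop}, the paper does not use submodularity. It gives a short direct argument (Lemma~\ref{lem:sogfinal}): within each common group the ordinary hard-thresholding comparison supplies the $k_2^*/k_2$ factor, and across groups the greedy selection supplies the $k_1^*/k_1$ factor. Your submodularity route may be workable, but note that Algorithm~\ref{alg:greedysog} selects groups by the full residual norm $\|\vv_G\|$ while only harvesting the top-$k_2$ entries, so it is not the canonical greedy algorithm for the set function you describe; the paper's direct argument sidesteps this mismatch.
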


\subsubsection*{Remarks}
Similar to Theorem \ref{thm:approx}, we see that there is a tradeoff between obtaining accurate projections $\epsilon$ and model mismatch $\gamma$. Specifically in this case, one can obtain small $\epsilon$ by increasing $k_1, k_2$ in Algorithm \ref{alg:greedysog}. However, this will mean we select large number of groups, and subsequently $\gamma$ increases. 

A result similar to Theorem~\ref{thm:approx_ls} can be obtained for the case when $f$ is least squares loss function. Specifically, the sample complexity evaluates to $n \geq \kappa^2 \left( k_1^* \log(M) + \kappa^2 k_1^* k_2^* \log(\max_{i}|G_i|) \right)$. We obtain results for least squares in Appendix \ref{lemsgl}. 

An interesting extension to the SoG case is that of a hierarchy of overlapping, sparsely activated groups. When the groups at each level do not overlap, this reduces to the case considered in \cite{jenatton2010proximal}. However, our theory shows that when a corresponding approximate projection operator is defined for the hierarchical overlapping case (extending Algorithm \ref{alg:greedysog}), IHT methods can be used to obtain the solution in an efficient manner.

\section{Experiments and Results}
\label{sec:exp}

\begin{figure}  [!h]
  \begin{minipage}[b]{0.25\linewidth}
    \centering
   \includegraphics[width = 30mm, height = 30mm]{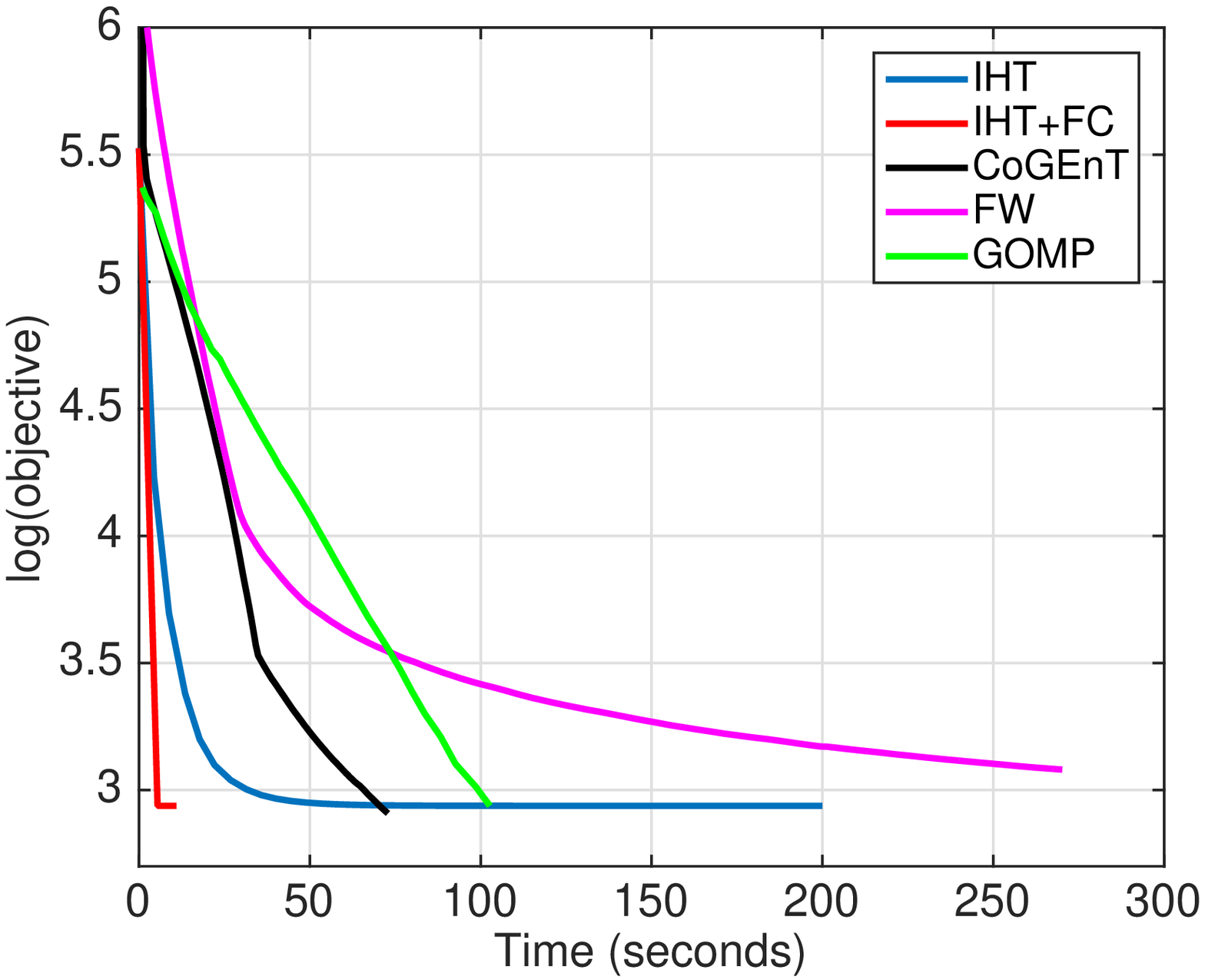}
\label{compare_objs}
  \end{minipage} 
    \hspace{-2ex}
  \begin{minipage}[b]{0.25\linewidth}
    \centering
   \includegraphics[width = 30mm, height = 30mm]{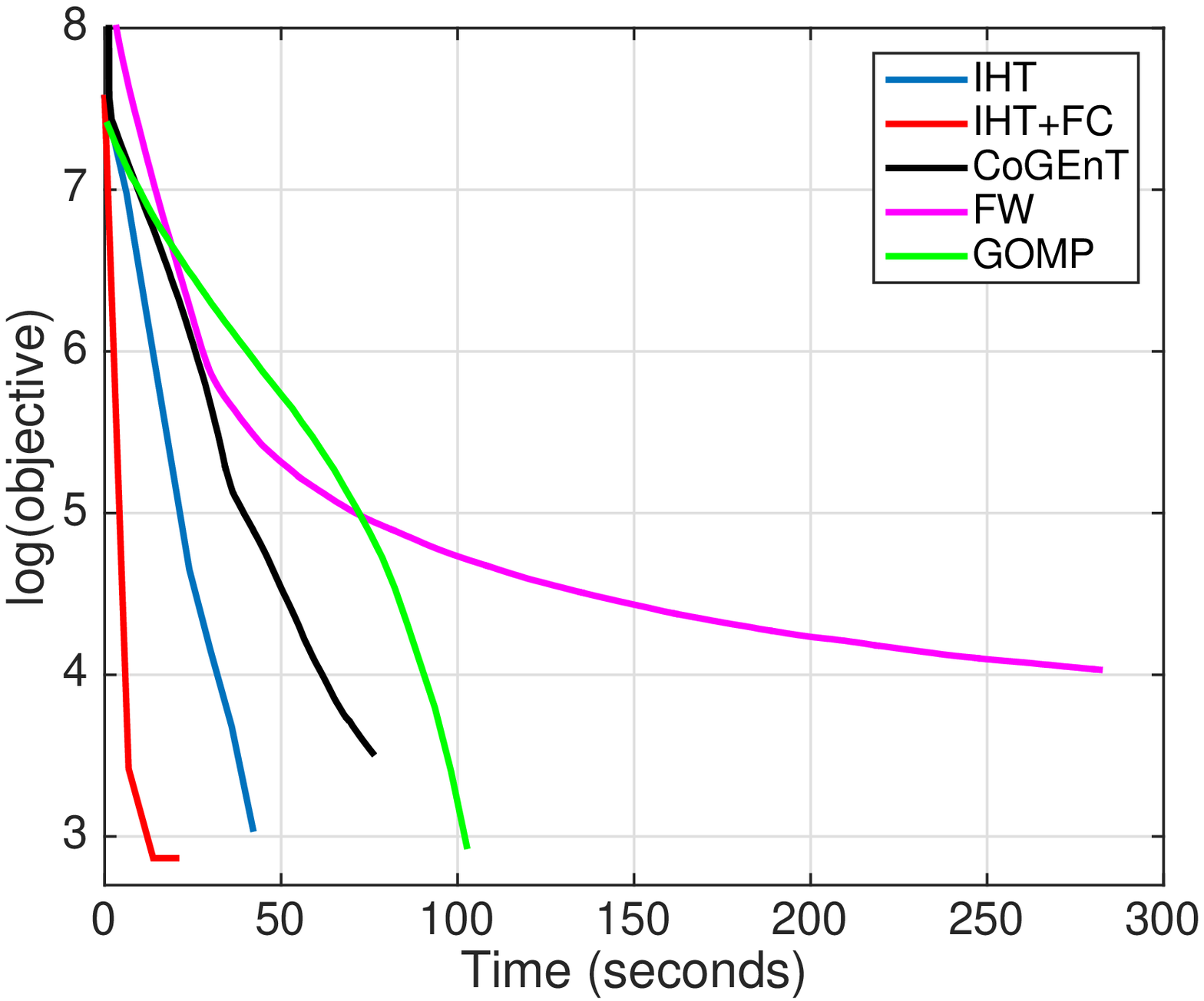}
\label{compare_poor}
  \end{minipage} 
    \hspace{-2ex}
  \begin{minipage}[b]{0.25\linewidth}
    \centering
    \includegraphics[width = 30mm, height = 30mm]{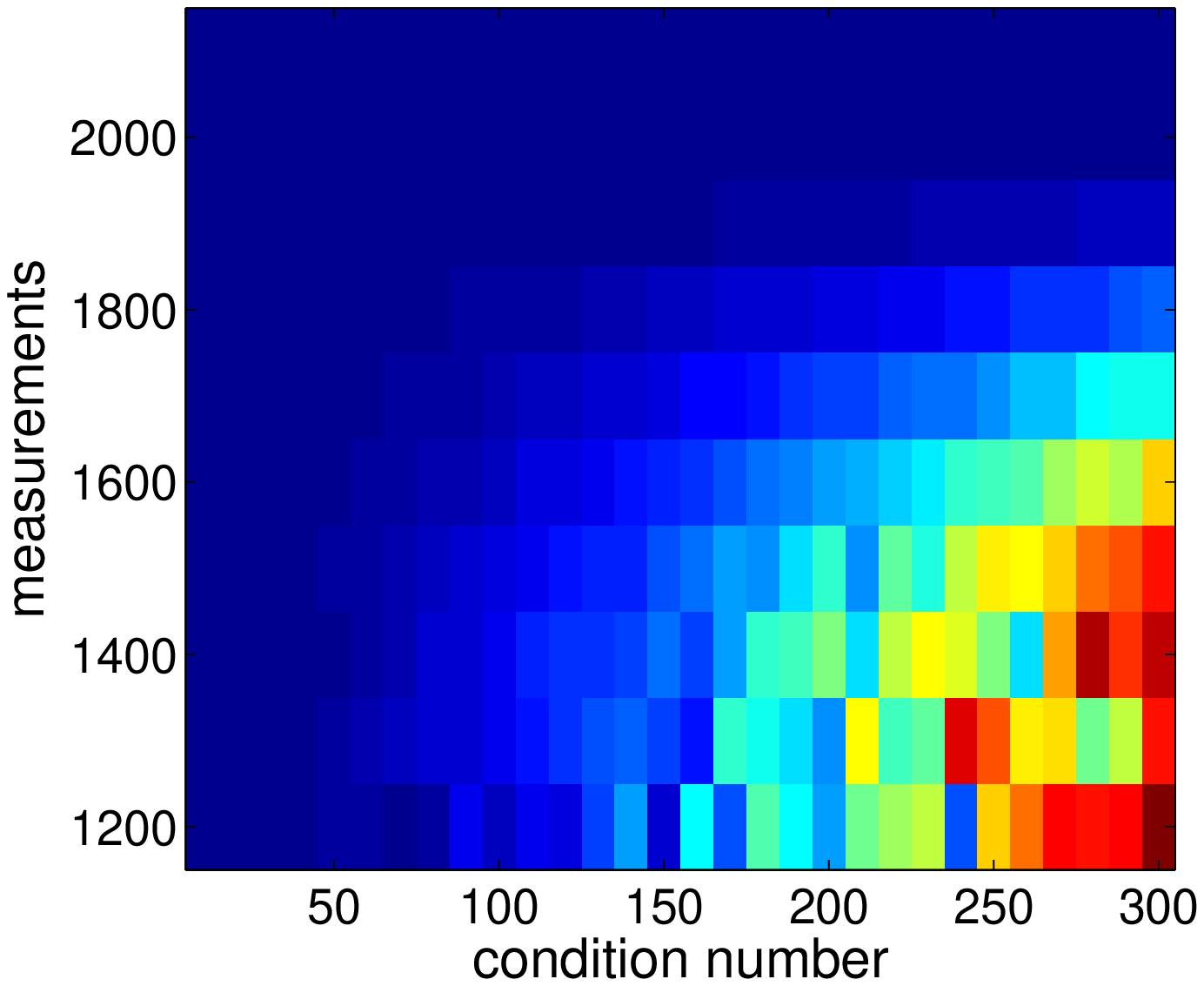}
\label{fig:pt}
  \end{minipage}
    \hspace{-2ex}
  \begin{minipage}[b]{0.25\linewidth}
    \centering
   \includegraphics[width = 30mm, height = 30mm]{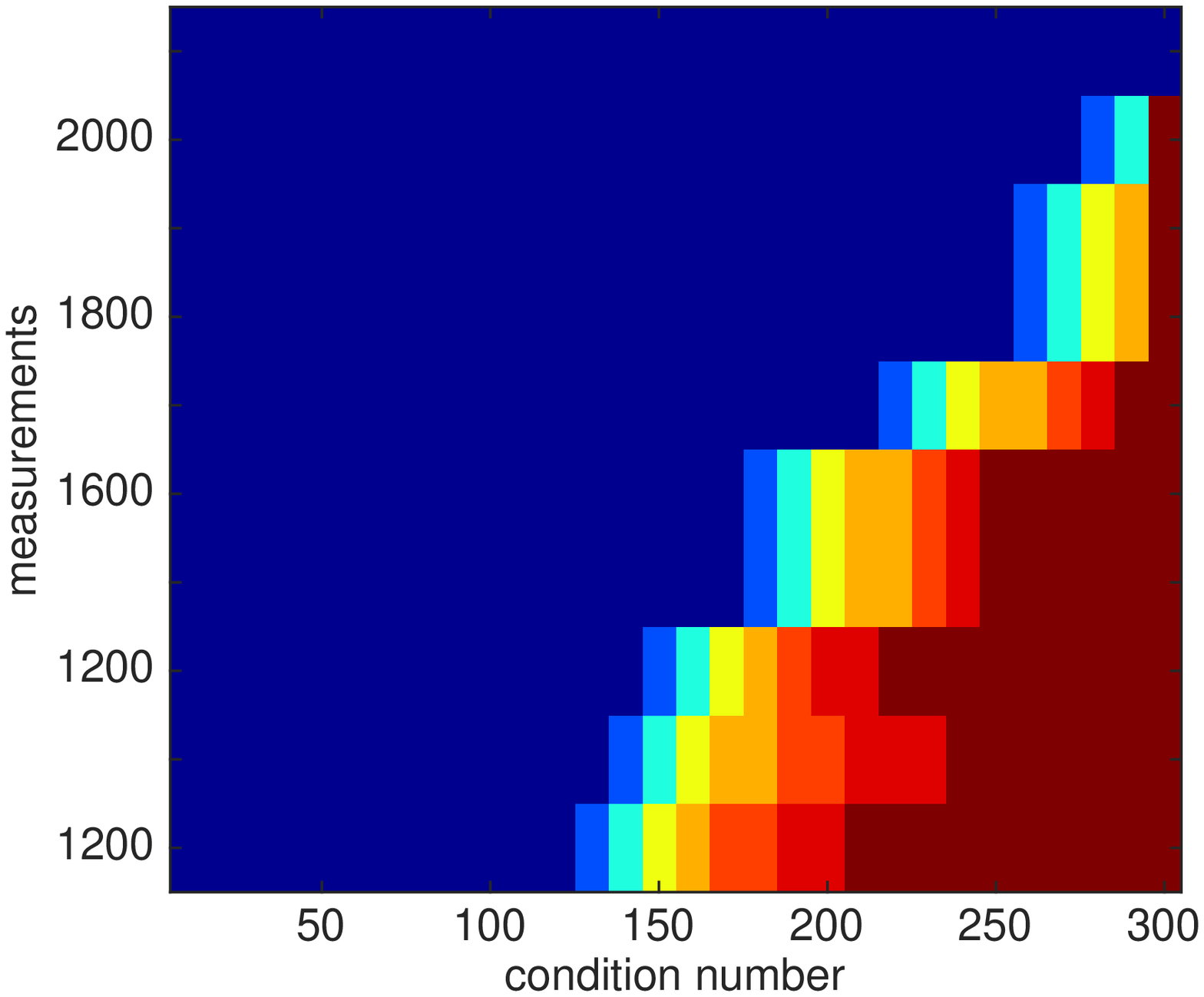}
\label{fig:ptgomp}    
  \end{minipage} 
  \caption{(From left to right) Objective value as a function of time for various methods, when data is well conditioned and poorly conditioned. The latter two figures show the phase transition plots for poorly conditioned data, for IHT and GOMP respectively.}
\end{figure}

In this section, we empirically compare and contrast our proposed group IHT methods against the existing approaches to solve the overlapping group sparsity problem. At a high level, we observe that our proposed variants of IHT indeed outperforms the existing  state-of-the-art methods for group-sparse regression in terms of time complexity. Encouragingly, IHT also performs competitively with the existing methods in terms of accuracy. In fact, our results on the breast cancer dataset shows a $10\%$ relative improvement in accuracy over existing methods. 

Greedy methods for group sparsity have been shown to outperform proximal point schemes, and hence we restrict our comparison to greedy procedures. We compared four methods: our algorithm with (\textbf{IHT-FC}) and without (\textbf{IHT}) the fully corrective step, the Frank Wolfe (\textbf{FW})  method \cite{tewari} , \textbf{CoGEnT}, \cite{rao2015forward} and the Group OMP (\textbf{GOMP}) \cite{groupOMP}. All relevant hyper-parameters were chosen via a grid search, and experiments were run on a macbook laptop with a 2.5 GHz processor and 16gb memory. Additional experimental results are presented in Appendix \ref{app:dwt}

\begin{figure*}[ht] 
  \begin{minipage}[c]{0.3\linewidth}
    \centering
  \includegraphics[width = 45mm, height = 30mm]{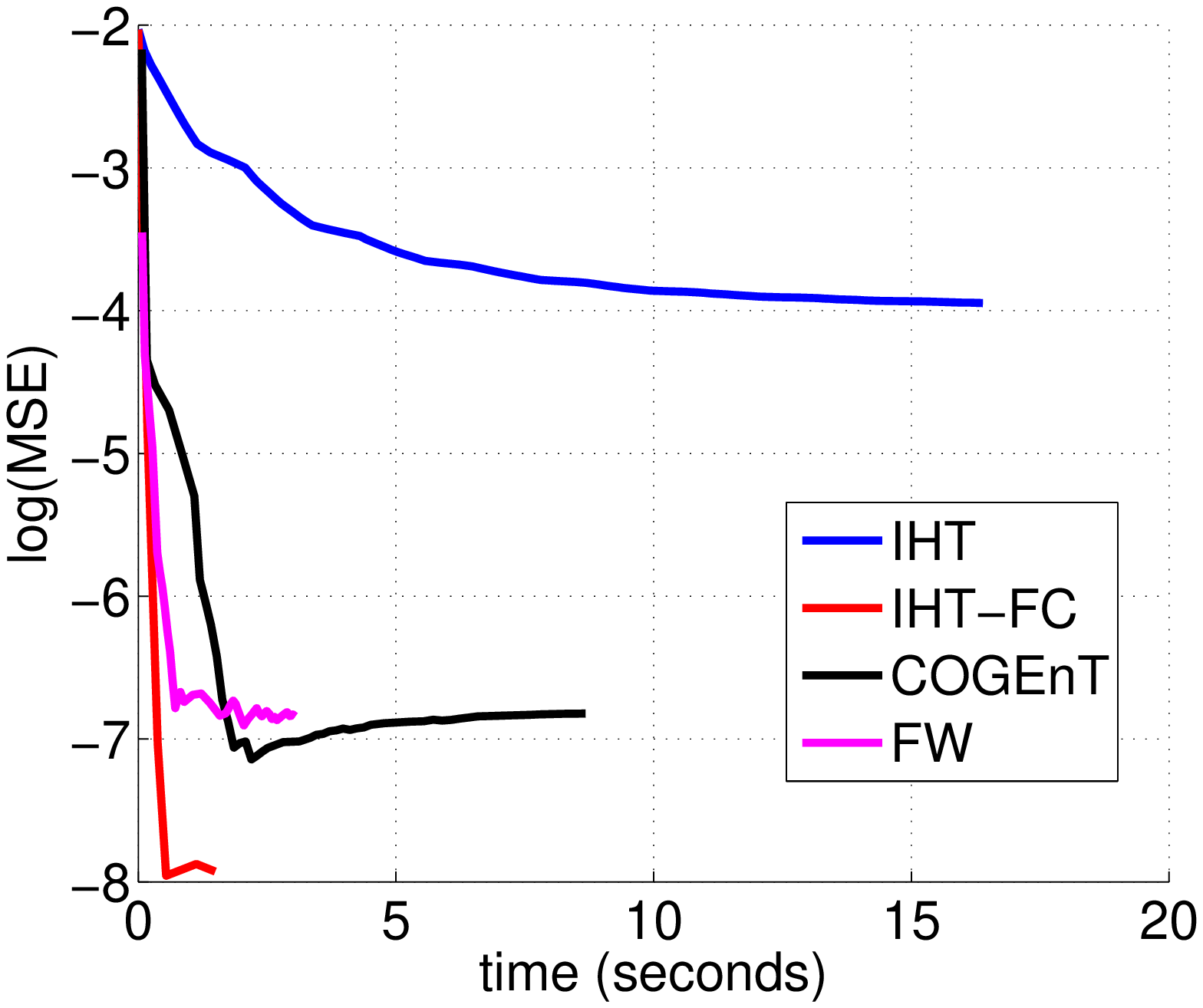}
\label{sog_mse}
  \end{minipage} 
    \hspace{-2ex}
  \begin{minipage}[c]{0.3\linewidth}
    \centering
   \includegraphics[width = 45mm, height = 30mm]{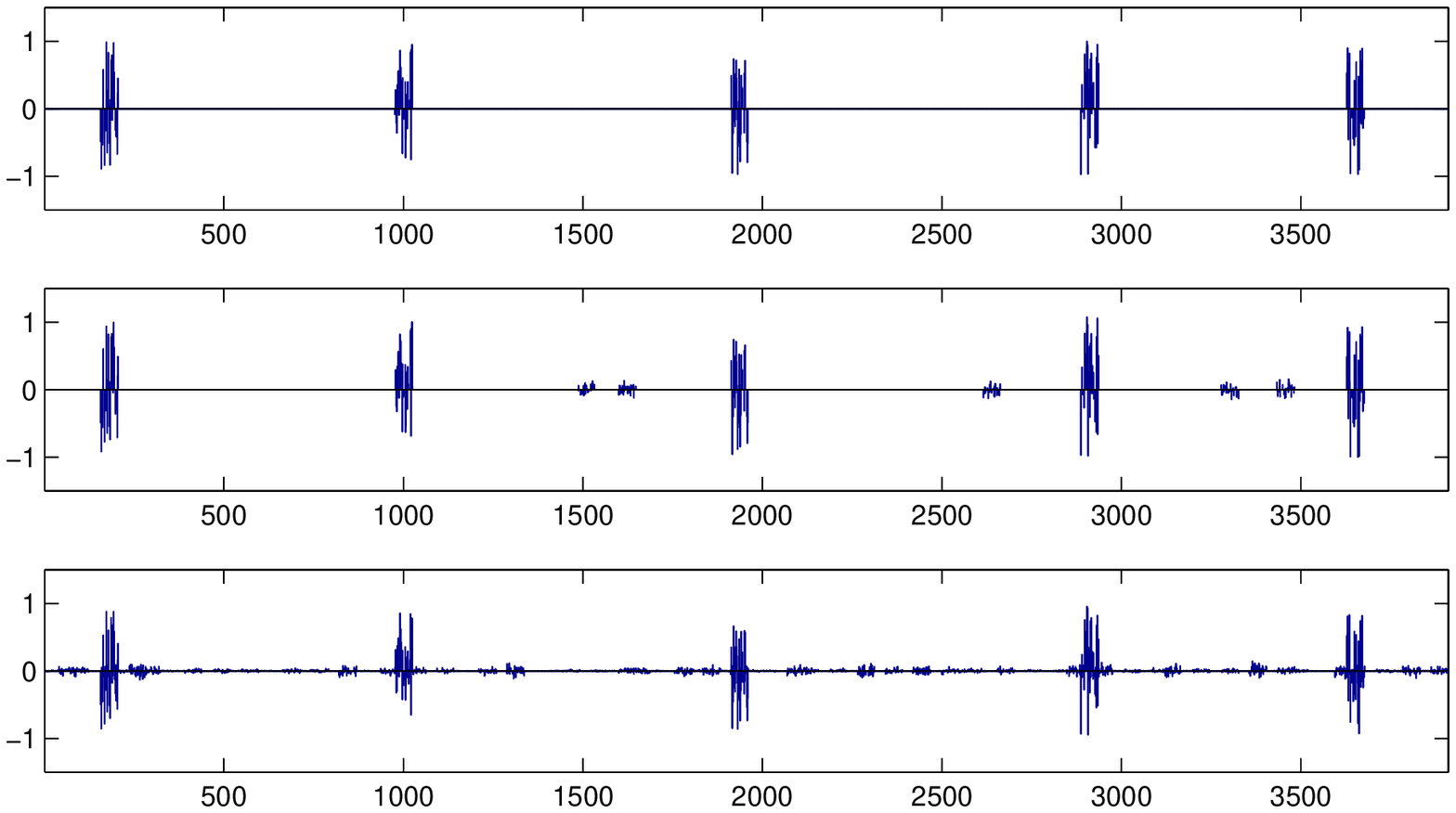}
\label{sog_rec}
  \end{minipage}
   \begin{minipage}[b]{0.33\linewidth}
     \begin{tabular}{ || c |c | c || }
 \hline
 \textbf{Method} & \textbf{Error $\%$ } & \textbf{time (sec)}\\
  \hline                       
  FW & 29.41 &  6.4538 \\
   \hline  
  IHT &  27.94 &  $\bm{0.0400}$ \\
    \hline
 GOMP &  ${25.01}$  & 0.2891 \\
  \hline
  CoGEnT  & 23.53 & 0.1414 \\
  \hline
  IHT-FC &  $\bm{21.65}$  & 0.1601 \\
  \hline
  \hline
\end{tabular}
  \end{minipage}
 \caption{ (Left) SoG: error vs time comparison for various methods,  (Center) SoG: reconstruction of the true signal (top) from IHT-FC (middle) and CoGEnT (bottom). (Right:) Tumor Classification: misclassification rate of various methods.}
\end{figure*}

{\bf Synthetic Data, well conditioned}: We first compared various greedy schemes for solving the overlapping group sparsity problem on synthetic data. We generated $M = 1000$ groups of contiguous indices of size 25; the last $5$ entries of one group overlap with the first $5$ of the next. We randomly set $50$ of these to be active, populated by uniform $[-1,1]$ entries. This yields $\vw^\star \in \R^p, ~\ p \sim 22000$. $\mX\in \R^{n\times p}$ where $n=5000$ and $X_{ij}\stackrel{i.i.d}{\sim} N(0,1)$. Each measurement is corrupted with Additive White Gaussian Noise (AWGN) with standard deviation $\lambda  = 0.1$. IHT mehods achieve orders of magnitude speedup compared to the competing schemes, and achieve almost the same (final) objective function value despite approximate projections (Figure 1 (Left)).

{\bf Synthetic Data, poorly conditioned}: Next, we consider the exact same setup, but with each row of $X$ given by: $\vx_i \sim N(0, \Sigma)$ where $\kappa=\smax(\Sigma)/\smin(\Sigma)=10$. Figure 1 (Center-left) shows again the advantages of using IHT methods; IHT-FC is about 10 times faster than the next best CoGEnT. 

We next generate phase transition plots for recovery by our method (IHT) as well as the state-of-the-art GOMP method. We generate vectors in the same vein as the above experiment, with $M = 500, ~\ B = 15, ~\ k = 25, ~\ p \sim 5000$. We vary the  the condition number of the data covariance ($\Sigma$) as well as the number of measurements ($n$). Figure 1 (Center-right and Right) shows the phase transition plot as the measurements and the condition number are varied for IHT, and GOMP respectively. The results are averaged over 10 independent runs. It can be seen that even for condition numbers as high as  200,  $n\sim 1500$ measurements suffices for IHT to exactly recovery $\wopt$, whereas GOMP with the same setting is not able to recover $\wopt$ even once.  
\paragraph{Tumor Classification, Breast Cancer Dataset} We next compare the aforementioned methods on a gene selection problem for breast cancer tumor classification. We use the data used in \cite{jacob} \footnote{download at $http://cbio.ensmp.fr/~ljacob/$}.  We ran a 5-fold cross validation scheme to choose parameters, where we varied $\eta \in \{2^{-5}, 2^{-4}, \ldots, 2^3 \} ~\  k \in \{2 , 5, 10, 15, 20, 50, 100 \} ~\ \tau \in \{2^3, 2^4, \ldots, 2^{13} \} $. Figure 2 (Right) shows that the vanilla hard thresholding method is competitive despite performing approximate projections, and the method with full corrections obtains the best performance among the methods considered. We randomly chose $15 \%$ of the data to test on. \\[3pt]
{\bf Sparse Overlapping Group Lasso:} Finally, we study the sparse overlapping group (SoG) problem that was introduced and analyzed in \cite{nrsos} (Figure 2).  We perform projections as detailed in Algorithm \ref{alg:greedysog}. We generated synthetic vectors with 100 groups of size 50 and randomly selected $5$ groups to be active, and among the active group only set $30$ coefficients to be non zero. The groups themselves were overlapping, with the last 10 entries of one group shared with the first 10 of the next, yielding $p \sim 4000$.  We chose the best parameters from a grid, and we set $k = 2k^*$ for the IHT methods. 


\section{Conclusions and Discussion}
\label{sec:conc}
We proposed a greedy-IHT method that can applied to regression problems over set of group sparse vectors. Our proposed solution is efficient, scalable, and provide fast convergence guarantees under general RSC/RSS style conditions, unlike existing methods. We extended our analysis to handle even more challenging structures like sparse overlapping groups. Our experiments show that IHT methods achieve fast, accurate results even with greedy and approximate projections.


\bibliography{IHT_group}
\bibliographystyle{plain}

\appendix
\section{Using submodularity to perform projections }
\label{app:lemsubmodular}

While solving \eqref{tofind} is NP-hard in general, the authors in \cite{volkan_tractability} showed that it can be approximately solved using methods from submodular function optimization, which we quickly recap here.  First, \eqref{tofind} can be cast in the following equivalent way:
\beq
\label{equivalent_tofind}
\hat{\G} =  \arg \max_{| \tG | \leq k} \left\{ \sum_{i \in I} \vg_i^2 : ~\ I = \cup_{G \in {\tG}} G  \right\} 
\eeq
Once we have $\hat{\G}$, $\hat{\vu}$ can be recovered by simply setting $\hat{\vu}_{I} = \vg_I$ and $0$ everywhere else, where $I = \cup_{G \in \hat{\G}} G$. Next, we have the following result
\begin{lemma}
Given a set $S \in [p]$, the function $z(S) = \sum_{i \in S} \vx_i^2.$ is submodular. 
\end{lemma}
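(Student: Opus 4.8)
The plan is to verify submodularity straight from the definition, which is immediate here because $z$ is in fact a \emph{modular} (additive) set function. Recall that $z \colon 2^{[p]} \to \R$ is submodular if and only if for all $A \subseteq B \subseteq [p]$ and every $j \notin B$ one has
\[
z(A \cup \{j\}) - z(A) \geq z(B \cup \{j\}) - z(B).
\]
First I would fix such $A$, $B$, and $j$, and compute the marginal gain of adjoining $j$ to an arbitrary set $T$ with $j \notin T$. Since $z(T) = \sum_{i \in T} \vx_i^2$ is a sum over the elements of $T$, adding the single new index contributes exactly its own term, so
\[
z(T \cup \{j\}) - z(T) = \vx_j^2,
\]
a quantity independent of $T$. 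Specializing to $T = A$ and $T = B$ shows that both sides of the submodularity inequality equal $\vx_j^2$; the inequality therefore holds with equality, and $z$ is submodular.

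Equivalently—and this is the framing I would actually write up, since it exposes the structure most transparently—one checks the lattice inequality $z(A) + z(B) \geq z(A \cup B) + z(A \cap B)$ for arbitrary $A, B \subseteq [p]$. Decomposing each of the four sums index by index, every $i$ contributes its weight $\vx_i^2$ to the left- and right-hand totals according to whether it lies in $A \setminus B$, $B \setminus A$, $A \cap B$, or neither; in all four cases the two sides receive identical contributions, yielding equality. This makes explicit that $z$ is simultaneously submodular and supermodular, i.e.\ modular.

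There is essentially no obstacle here: the claim is a one-line consequence of the additivity of $z$, and the nonnegativity of the squares $\vx_i^2$ is not even needed for submodularity. That nonnegativity is instead what guarantees the additional property that $z$ is monotone nondecreasing, which is the feature actually exploited when the greedy Algorithm~\ref{alg:greedy} is run on the equivalent objective \eqref{equivalent_tofind}. The only point worth stating carefully is which of the two equivalent definitions of submodularity one invokes; I would lead with the lattice inequality and record the constant-marginal-gain form as a corollary, so that the link to the greedy guarantee of Lemma~\ref{lem:subapprox} is transparent.
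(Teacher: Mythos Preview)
Your argument is correct for the function you wrote down, but it is not the function the paper actually cares about or proves submodularity for. As literally stated the lemma defines $z$ on subsets $S\subseteq[p]$ of \emph{indices}, and you rightly observe that such a $z$ is modular. But look at the surrounding text and at the paper's own proof: the sets $S$, $T$ there are sets of \emph{groups}, and the function being analysed is
\[
\tilde z(\mathcal{S}) \;=\; \sum_{\ell \in \bigcup_{j\in\mathcal{S}} G_j} \vx_\ell^2,\qquad \mathcal{S}\subseteq[M],
\]
which is exactly the objective in \eqref{equivalent_tofind} over which Algorithm~\ref{alg:greedy} runs its greedy selection. This group-level function is \emph{not} modular once groups overlap: the marginal gain of adding group $G_i$ to a collection $\mathcal{S}$ is $\sum_{\ell\in G_i\setminus \bigcup_{j\in\mathcal{S}}G_j}\vx_\ell^2$, which shrinks as $\mathcal{S}$ grows because more of $G_i$ is already covered. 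The paper proves submodularity precisely via this diminishing-returns computation, and nonnegativity of the weights $\vx_\ell^2$ is essential for the inequality $\sum_{\ell\in G_i\setminus SS}\vx_\ell^2 \ge \sum_{\ell\in G_i\setminus TT}\vx_\ell^2$ when $SS\subseteq TT$.

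So two of your side remarks are misleading in the intended setting: (i) nonnegativity \emph{is} needed for submodularity of $\tilde z$, not merely for monotonicity; and (ii) the function is genuinely submodular rather than modular, which is why the $(1-e^{-\tilde k/k})$ bound of Lemma~\ref{lem:subapprox} is the right guarantee rather than exact optimality of greedy. The fix is simply to redo your marginal-gain computation with $\tilde z$ on $2^{[M]}$ in place of $z$ on $2^{[p]}$; the argument is still short, but no longer an equality.
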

\begin{proof}

First, recall the definition of a submodular function:
\begin{definition}
Let $Q$ be a finite set, and let $z(\cdot)$ be a real valued function defined on $\Omega^Q$, the power set of $Q$. The function $z(\cdot)$ is said to be submodular if 
\[
z(S) + z(T) \geq z(S \cup T) + z(S \cap T) ~\ \forall S, T \subset \Omega^Q
\]
\end{definition}

Let $S$ and $T$ be two sets of groups, s.t., $S\subseteq T$. Let, $SS=\supp(\cup_{j\in S}G_j)$ and $TT=\supp(\cup_{j\in T}G_j)$. Then, $SS\subseteq TT$. Hence, 
\begin{align*}
z(S\cup i) - z(S) &= \sum_{\ell\in SS\cup \supp(G_i)} \vx_\ell^2 - \sum_{\ell \in SS} \vx_\ell^2 \\
&= \sum_{\ell\in \supp(G_i)\backslash SS} \vx_\ell^2\stackrel{\zeta_1}{\geq} \sum_{\ell\in \supp(G_i)\backslash TT} \vx_\ell = z(T\cup i) - z(T),
\end{align*}
where $\zeta_1$ follows from $SS\subseteq TT$. This completes the proof.
\end{proof}

 This result shows that \eqref{equivalent_tofind} can be cast as a problem of the form
\beq
\label{eq:submodular}
\max_{S \subset Q}  z(S),\ \text{s.t.} ~\ |S| \leq k.
\eeq
Algorithm \ref{alg:greedy}, which details the pseudocode for performing approximate projections, exactly corresponds to the greedy algorithm for submodular optimization \cite{sub_bach}, and this gives us a means to assess the quality of our projections.

\subsection{Proof of Lemma~\ref{lem:subapprox}}\label{app:lemsubapprox}
\begin{proof}
First, from the approximation property of the greedy algorithm \cite{nemhauser}, 
\beq
\label{greedybound}
\| \hvu \|^2 \geq \left( 1 - e^{-\frac{k'}{k}} \right) \| \vu_* \|^2
\eeq
Also, $\|\vg-\hvu\|^2=\|\vg\|^2-\|\hvu\|^2$ because $(\hvu)_{\supp(\hvu)}=(\vg)_{\supp(\hvu)}$ and $0$ otherwise. 

Using the above two equations, we have: 
\begin{align}
  \|\vg-\hvu\|^2&\leq \|\vg\|^2-\| \vu_* \|^2+e^{-\frac{k'}{k}}\| \vu_* \|^2, \nonumber\\
&= \|\vg-\vu_*\|^2+e^{-\frac{k'}{k}}\| \vu_* \|^2, \nonumber\\
&=\|\vg-\vu_*\|^2+e^{-\frac{k'}{k}}\|(\vg)_{\supp(\vu_*)}\|^2, 
\end{align}
where both equalities above follow from the fact that due to optimality, $(\vu_*)_{\supp(\vu_*)}=(\vg)_{\supp(\vu_*)}$. 

\end{proof}


\section{Proof of Theorem~\ref{thm:approx}}\label{app:thm:approx}
\begin{proof}
Recall that $\vg_t=\vw_t-\eta \nabla f(\w_t)$, $\vw_{t+1}=\projh(\vg_t)$. 

Let $\supp(\vw_{t+1})=S_{t+1}$, $\supp(\wopt)=S_*$, $I=S_{t+1}\cup S_*$, and $M=S_*\backslash S_{t+1}$. Also, note that $|\gsupp(I)|\leq k+k^*$. 

Moreover, $(\vw_{t+1})_{S_{t+1}}=(\vg_t)_{S_{t+1}}$ (See Algorithm \ref{alg:greedy}). Hence, $\|(\vw_{t+1}-\vg_t)_{S_{t+1}\cup S_*}\|_2^2=\|(\vg_t)_M\|_2^2$. 

Now, using Lemma~\ref{lem:ht_large1} with $\vz=(\vg_t)_I$,we have: 
\begin{align}
  \label{eq:ap1}
  \|(\vw_{t+1}-\vg_t)_I\|_2^2=\|(\vg_t)_M\|_2^2& 
\stackrel{\zeta_1}{\leq} \frac{k^*}{k-\widetilde{k}}\cdot\|(\vg_t)_{S_{t+1}\backslash S_*}\|_2^2+\frac{k^*\epsilon}{k-\widetilde{k}},\notag\\
&\stackrel{\zeta_2}{\leq} \frac{k^*}{k-\widetilde{k}}\cdot\|(\wopt-\vg_t)_I\|_2^2+\frac{k^*\epsilon}{k-\widetilde{k}},
\end{align}
where $\zeta_1$ follows from $M\subset S_*$ and hence $|\gsupp(M)|\leq |\gsupp(S_*)|=k^*$. $\zeta_2$ follows since  $\wopt_{S_{t+1}\backslash S_*}=0$.

Now, using the fact that $\|(\vw_{t+1}-\wopt)_I\|_2=\|\vw_{t+1}-\wopt\|_2$ along with triangle inequality, we have: 
\begin{align}
\notag
  &\|\vw_{t+1}-\wopt\|_2 \\
  &\leq \left(1+\sqrt{\frac{k^*}{k-\widetilde{k}}}\right)\cdot\|(\wopt-\vg_t)_I\|_2+\sqrt{\frac{k^*\epsilon}{k-\widetilde{k}}},\\
&\stackrel{\zeta_1}{\leq} \left(1+\sqrt{\frac{k^*}{k-\widetilde{k}}}\right)\cdot\|(\wopt-\vw_t-\eta (\nabla f(\wopt)-\nabla f(\vw_t)))_I\|_2+2\eta \|(\nabla f(\wopt))_{S_{t+1}}\|_2+\sqrt{\frac{k^*\epsilon}{k-\widetilde{k}}},\notag\\
&\stackrel{\zeta_2}{\leq}\left(1+\sqrt{\frac{k^*}{k-\widetilde{k}}}\right)\cdot\|(I-\eta H_{(I\cup S_t)(I\cup S_t)}(\alpha))(\vw_t-\wopt)_{I\cup S_t}\|_2+2\eta \|(\nabla f(\wopt))_{S_{t+1}}\|_2+\sqrt{\frac{k^*\epsilon}{k-\widetilde{k}}},\notag\\
&\stackrel{\zeta_3}{\leq}\left(1+\sqrt{\frac{k^*}{k-\widetilde{k}}}\right)\cdot\left(1-\frac{\alpha_{2k+k^*}}{L_{2k+k^*}}\right)\|\vw_t-\wopt\|_2+\frac{2}{L_{2k+k^*}} \|(\nabla f(\wopt))_{S_{t+1}}\|_2+\sqrt{\frac{k^*\epsilon}{k-\widetilde{k}}},\notag\\
  \label{eq:ap2}
\end{align}
where $\alpha=c \vw_t + (1-c) \wopt$ for $c>0$ and $H(\alpha)$ is the Hessian of $f$ evaluated at $\alpha$. $\zeta_1$ follows from triangle inequality, $\zeta_2$ follows from the Mean-Value theorem and $\zeta_3$ follows from the RSC/RSS condition and by setting $\eta=1/L_{2k+k^*}$. 

The theorem now follows by setting $k=2\left(\left(\frac{L_{2k+k^*}}{\alpha_{2k+k^*}}\right)^2+1\right)\cdot \log(\|\wopt\|_2/\epsilon)$ and $\epsilon$ appropriately. 


\end{proof}

\begin{lemma}\label{lem:subset}
  Let $\vw=\projh(\vg)$ and let $S=\supp(\vw)$. Then, for every $I$ s.t. $S\subseteq I$, the following holds: $$\vw_I=\projh(g_I).$$
\end{lemma}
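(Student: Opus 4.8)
The plan is to show directly that $\projh(\vg_I)=\vw$; since $S=\supp(\vw)\subseteq I$ we have $\vw_I=\vw$, so this is exactly the claimed identity. Throughout, $\projh$ denotes the greedy operator of Algorithm~\ref{alg:greedy}, and I write $G_1^\star,\dots,G_{\tilde k}^\star$ for the groups it selects, in order, when run on $\vg$, with $\vv^{(t)}$ the residual at the start of iteration $t$ (so $\vv^{(1)}=\vg$).

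First I would track the two runs of the greedy algorithm in parallel: one on $\vg$ and one on $\vg_I$. Let $\tilde{\vv}^{(t)}$ denote the residual of the run on $\vg_I$. The heart of the argument is the inductive claim that both runs select the same group at every step and that $\tilde{\vv}^{(t)}=(\vv^{(t)})_I$ for all $t$. The base case is immediate since $\tilde{\vv}^{(1)}=\vg_I=(\vg)_I$.

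For the inductive step, the key observation is that the newly captured entries of the selected group $G_t^\star$ all lie in $S$: the residual $\vv^{(t)}$ vanishes on the previously selected coordinates, so $(\vv^{(t)})_{G_t^\star}$ is supported on $G_t^\star\setminus(\cup_{i<t}G_i^\star)$, and on any such coordinate $j$ the final output satisfies $\vw_j=\vg_j=(\vv^{(t)})_j$; hence wherever $(\vv^{(t)})_{G_t^\star}$ is nonzero we have $j\in S\subseteq I$. Consequently restricting to $I$ leaves the selected group's norm unchanged, $\|((\vv^{(t)})_I)_{G_t^\star}\|=\|(\vv^{(t)})_{G_t^\star}\|$, whereas for every competing group $G$ it can only shrink the norm, $\|((\vv^{(t)})_I)_G\|\leq\|(\vv^{(t)})_G\|$. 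Combining these with the induction hypothesis $\tilde{\vv}^{(t)}=(\vv^{(t)})_I$ and the maximality of $G_t^\star$ in the run on $\vg$ forces $G_t^\star$ to be the maximizer in the run on $\vg_I$ as well; updating the residual via $\vv\mapsto\vv-\vv_{G_t^\star}$ then preserves the relation $\tilde{\vv}^{(t+1)}=(\vv^{(t+1)})_I$, since restriction to $I$ commutes with zeroing out a group. After $\tilde k$ steps both runs have selected the identical family $\{G_i^\star\}$, and reading off the outputs gives $\projh(\vg_I)=(\vg_I)_{\cup_i G_i^\star}=\vw$, because $\vg$ and $\vg_I$ agree on $S$ while $\vg$ vanishes on $(\cup_i G_i^\star)\setminus S$.

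I expect the only delicate point to be the preservation of the argmax when passing from $\vg$ to $\vg_I$: the inequality $\|((\vv^{(t)})_I)_G\|\leq\|((\vv^{(t)})_I)_{G_t^\star}\|$ pins down the maximizer only if ties are broken consistently, so I would fix a deterministic tie-breaking rule (or assume a generic $\vg$ with unique maximizers) and check that the same rule yields the same choice in both runs. Everything else is bookkeeping about supports.
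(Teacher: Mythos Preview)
Your proposal is correct and follows essentially the same approach as the paper: both argue that the greedy procedure, when run on $\vg_I$, selects exactly the same ordered sequence of groups as when run on $\vg$, by comparing residual norms step by step and appealing to determinism of the selection rule. Your write-up is in fact more careful than the paper's short proof; you make explicit the key observation that the nonzero coordinates of $(\vv^{(t)})_{G_t^\star}$ lie in $S\subseteq I$ (so the winning score is preserved while all competitors' scores can only drop), and you correctly flag the need for a fixed deterministic tie-breaking rule, which the paper simply asserts via ``the greedy selection procedure is \textbf{deterministic}.''
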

\begin{proof}
Let $Q=\{i_1, i_2, \dots, i_k\}$ be the $k$-groups selected when the greedy procedure (Algorithm~\ref{alg:greedy}) is applied to $\vg$. Then, $$\|\vw_{G_{i_j}\backslash (\cup_{1\leq \ell\leq j-1}G_{i_\ell})}\|_2^2\geq \|\vw_{G_i\backslash (\cup_{1\leq \ell\leq j-1}G_{i_\ell})}\|_2^2,\ \forall 1\leq j\leq k,\ \forall i\notin Q.$$
Moreover, the greedy selection procedure is {\bf deterministic}. Hence, even if groups $G_i$ are restricted to lie in a subset of $\G$, the output of the procedure remains exactly the same. 
\end{proof}

\begin{lemma}\label{lem:ht_large1}
Let $\vz\in \R^{p}$ be any vector. Let $\wh=\projh(\vz)$ and let $\wopt\in \R^p$ be s.t. $|\gsupp(\wopt)|\leq k^*$. Let $S=\supp(\wh)$, $S_*=\supp(\wopt)$, $I=S\cup S_*$, and $M=S_*\backslash S$. Then, the following holds: $$\frac{\|\vz_M\|_2^2}{k^*}-\frac{\epsilon}{k-\widetilde{k}}\leq \frac{\|\vz_{S\backslash S^*}\|_2^2}{k-\widetilde{k}},$$
where $\widetilde{k}=O(k^*\log(\|\wopt\|_2/\epsilon))$.
\end{lemma}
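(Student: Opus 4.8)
The plan is to exploit the defining greedy property of Algorithm~\ref{alg:greedy}---that at every iteration the selected group has the largest residual norm among all remaining groups---together with a pigeonhole argument over the at most $k^*$ groups that cover $S_*=\supp(\wopt)$. Write $S^{(j)}$ for the support covered by the greedy run on $\vz$ after $j$ steps (so $S^{(k)}=S$), let $N_j=S^{(j)}\backslash S^{(j-1)}$ be the coordinates freshly covered at step $j$, and let $c_j=\|\vz_{N_j}\|_2^2$ be the mass captured at that step. First I would observe that $M=S_*\backslash S$ is \emph{never} covered by the greedy run: if a selected group contained an index of $M$, that index would lie in $S$, contradicting $M\cap S=\emptyset$. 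Hence on $M$ the greedy residual always coincides with $\vz$.

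Next I would establish the core inequality $c_j\geq \|\vz_M\|_2^2/k^*$ for every $j\leq k$. Since $M\subseteq S_*$ and $S_*$ is covered by at most $k^*$ groups, $M$ is covered by at most $k^*$ groups; by pigeonhole one of them, $H$, satisfies $\|\vz_{H\cap M}\|_2^2\geq \|\vz_M\|_2^2/k^*$. This $H$ is unselected at step $j$ (it covers the still-uncovered $M$), and because the residual equals $\vz$ on $M$, its residual norm is at least $\|\vz_M\|_2^2/k^*$; the greedy maximality of the chosen group then forces $c_j\geq\|\vz_M\|_2^2/k^*$. Summing over the last $k-\widetilde{k}$ steps gives $\|\vz_{S\backslash S^{(\widetilde{k})}}\|_2^2=\sum_{j=\widetilde{k}+1}^{k}c_j\geq (k-\widetilde{k})\,\|\vz_M\|_2^2/k^*$, and partitioning the covered support according to $S_*$ yields $\|\vz_{S\backslash S^{(\widetilde{k})}}\|_2^2\leq \|\vz_{S\backslash S_*}\|_2^2+\|\vz_{(S\cap S_*)\backslash S^{(\widetilde{k})}}\|_2^2$.

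It remains to control $\|\vz_{(S\cap S_*)\backslash S^{(\widetilde{k})}}\|_2^2$, i.e.\ the mass on the true support that the greedy run captures only \emph{after} iteration $\widetilde{k}$, and to show it is at most $\epsilon$. For this I would invoke the submodular approximation guarantee of Lemma~\ref{lem:subapprox} (equivalently, the geometric decay of the coverage gap $\max_{|\tG|\leq k^*}\|\vz_{\cup \tG}\|_2^2-\|\vz_{S^{(j)}}\|_2^2$ by a factor $(1-1/k^*)$ per step). With target group-sparsity $k^*$, this gap---and hence the uncovered portion of $S_*$---shrinks like $e^{-\widetilde{k}/k^*}\|\vz\|_2^2$, so choosing $\widetilde{k}=O(k^*\log(\|\wopt\|_2/\epsilon))$ drives it below $\epsilon$. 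Substituting and dividing by $k-\widetilde{k}$ produces exactly $\|\vz_M\|_2^2/k^*-\epsilon/(k-\widetilde{k})\leq \|\vz_{S\backslash S_*}\|_2^2/(k-\widetilde{k})$.

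I expect this last step to be the crux. The delicate point is that greedy may spend several of its first $\widetilde{k}$ iterations covering high-mass coordinates \emph{outside} $S_*$, so one cannot simply assert that the residual restricted to $S_*$ decays geometrically step by step; the decay must be routed through the coverage-gap guarantee. Care is needed because once greedy has selected more than $k^*$ groups its coverage can exceed the best $k^*$-group coverage, so the naive gap recursion becomes vacuous and the exponential bound must be applied only in the regime where it is valid. Managing this---and absorbing the resulting constant factors into the choice of $k$, as is already done through the large constant in Theorem~\ref{thm:approx}---is the main obstacle; the pigeonhole and summation steps are otherwise routine.
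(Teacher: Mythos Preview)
Your overall strategy matches the paper's proof: establish the per-step greedy lower bound $c_j\geq \|\vz_M\|_2^2/k^*$ via pigeonhole over the $k^*$ groups covering $S_*$, sum over the last $k-\widetilde{k}$ steps to obtain $\|\vz_{S\backslash S^{(\widetilde{k})}}\|_2^2\geq (k-\widetilde{k})\,\|\vz_M\|_2^2/k^*$, and then invoke the submodular approximation guarantee on the first $\widetilde{k}$ steps to pass from $S^{(\widetilde{k})}$ to $S_*$. These are exactly the three ingredients the paper uses.

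The one genuine gap is in your final step. You propose to show $\|\vz_{(S\cap S_*)\backslash S^{(\widetilde{k})}}\|_2^2\leq\epsilon$, but this does \emph{not} follow from Lemma~\ref{lem:subapprox}. That lemma (equivalently, the Nemhauser bound) only gives $\|\vz_{S^{(\widetilde{k})}}\|_2^2\geq \|\vz_{S_*}\|_2^2-\epsilon$; it says nothing about where $S^{(\widetilde{k})}$ sits relative to $S_*$. If $\vz$ carries many high-mass coordinates outside $S_*$, greedy's first $\widetilde{k}$ selections can be entirely disjoint from $S_*$, so $(S\cap S_*)\backslash S^{(\widetilde{k})}=S\cap S_*$ and its mass need not be small. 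Your own caveat anticipates exactly this scenario, but ``routing through the coverage gap'' cannot repair it: the coverage gap controls total captured mass, not the intersection with $S_*$.

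The fix---and this is what the paper does---is to skip the set-partition and compare total masses directly:
\[
\|\vz_{S\backslash S^{(\widetilde{k})}}\|_2^2 \;=\; \|\vz_S\|_2^2-\|\vz_{S^{(\widetilde{k})}}\|_2^2 \;\leq\; \|\vz_S\|_2^2-\|\vz_{S_*}\|_2^2+\epsilon \;\leq\; \|\vz_{S\backslash S_*}\|_2^2+\epsilon,
\]
where the first inequality is precisely the submodular bound and the second uses $\|\vz_{S\cap S_*}\|_2^2\leq \|\vz_{S_*}\|_2^2$. This needs only the \emph{total} mass of $S^{(\widetilde{k})}$, not its overlap with $S_*$, and combined with your summation step gives the lemma immediately.
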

\begin{proof}

Recall that the $k$ groups are added greedily to form $S=\supp(\widehat{\w})$. Let $Q=\{i_1, i_2, \dots, i_k\}$ be the $k$-groups selected when the greedy procedure (Algorithm~\ref{alg:greedy}) is applied to $\vz$. Then, 
$$\|\vz_{G_{i_j}\backslash (\cup_{1\leq \ell\leq j-1}G_{i_\ell})}\|_2^2\geq \|\vz_{G_i\backslash (\cup_{1\leq \ell\leq j-1}G_{i_\ell})}\|_2^2,\ \ \ \forall 1\leq j\leq k,\ \ \forall i\notin Q.$$
Now, as $\cup_{1\leq \ell\leq j-1}G_{i_\ell}\subseteq S,\ \forall 1\leq j\leq k$, we have: 
$$\|\vz_{G_{i_j}\backslash (\cup_{1\leq \ell\leq j-1}G_{i_\ell})}\|_2^2\geq \|\vz_{G_i\backslash S}\|_2^2,\ \ \ \forall 1\leq j\leq k,\ \ \forall i\notin Q.$$
Let $\gsupp(\wopt)=\{\ell_1, \dots, \ell_{k^*}\}$. Then, adding the above inequalities for each $\ell_j$ s.t. $\ell_j\notin Q$, we get: 
\begin{equation}\label{eq:htl113}\|\vz_{G_{i_j}\backslash (\cup_{1\leq \ell\leq j-1}G_{i_\ell})}\|_2^2\geq \frac{\|\vz_{S^*\backslash S}\|_2^2}{k^*},\end{equation}
where the above inequality also uses the fact that $\sum_{\ell_j\in \gsupp(\wopt), \ell_j\notin Q}\|\vz_{G_{\ell_j}\backslash S}\|_2^2\geq \|\vz_{S^*\backslash S}\|_2^2$.

Adding \eqref{eq:htl113} $\forall\ (\widetilde{k}+1)\leq j\leq k$, we get: 
\begin{equation}
  \label{eq:htl114}
  \|\vz_S\|_2^2-\|\vz_B\|_2^2\geq \frac{k-\widetilde{k}}{k^*}\cdot \|\vz_{S^*\backslash S}\|_2^2,
\end{equation}
where  $B={\cup_{1\leq j\leq \widetilde{k}} G_{i_j}}$. 

Moreover using Lemma~\ref{lem:subapprox} and the fact that $|\gsupp(\vz_{S^*})|\leq k^*$, we get: $\|\vz_B\|_2^2\geq \|\vz_{S^*}\|_2^2-\epsilon$. Hence, 
\begin{align}
  \frac{\|\vz_M\|_2^2}{k^*}&\leq \frac{\|\vz_S\|_2^2-\|\vz_B\|_2^2}{k-\widetilde{k}}\leq \frac{\|\vz_{S}\|_2^2-\|\vz_{S^*}\|_2^2+\epsilon}{k-\widetilde{k}}\leq \frac{\|\vz_{S\backslash S^*}\|_2^2+\epsilon}{k-\widetilde{k}}.
\end{align}

Lemma now follows by a simple manipulation of the above given inequality.
\end{proof}


\section{Proof of Lemma~\ref{lem:rsc_gl}}\label{app:lem:rsc_gl}

\begin{proof}
  Note that, 
$$\|X\w\|_2^2=\sum_i (\vx_i^T\w)^2=\sum_i (\vz_i^T\Sigma^{1/2}\w)^2=\|Z\Sigma^{1/2}\w\|_2^2,$$
where $Z\in \R^{n\times p}$ s.t. each row $\vz_i \sim N(0, I)$ is a standard multivariate Gaussian. Now, using Theorem 1 of \cite{blumensathuos}, and using the fact that $\Sigma^{1/2}\w$ lies in a union of ${M\choose k}$ subspaces each of at most $s$ dimensions, we have $\left(w.p. \geq 1-1/(M^k\cdot 2^s)\right)$: 
$$\left(1-\frac{4}{\sqrt{C}}\right)\|\Sigma^{1/2}\w\|_2^2\leq \frac{1}{n}\|Z\Sigma^{1/2}\w\|_2^2\leq \left(1+\frac{4}{\sqrt{C}}\right)\|\Sigma^{1/2}\w\|_2^2.$$
The result follows by using the definition of $\sigma_{\min}$ and $\sigma_{\max}$. 
\end{proof}


\section{Proof of Theorem~\ref{thm:exact_gl}}\label{app:thm:exact_gl}
\begin{proof}
Recall that $\vg_t=\vw_t-\eta \nabla f(\w_t)$, $\vw_{t+1}=\proj(\vg_t)$. Similar to the proof of Theorem~\ref{thm:approx} (Appendix \ref{app:thm:approx}), we define $S_{t+1}=\supp(\vw_{t+1})$, $S_t=\supp(\vw_t)$, $S_* = \supp(\wopt)$, $I=S_{t+1}\cup S_*$, $J=I\cup S_t$, and $M=S_*\backslash S_{t+1}$. Also, note that $|\gsupp(I)|\leq k+k^*$, $|\gsupp(J)|\leq 2k+k^*$. 

Now, using Lemma~\ref{lem:ht_large} with $\vz=(\vg_t)_I$, we have: $ \|(\vw_{t+1}-\vg_t)_I\|_2^2\leq \frac{k^*}{k}\cdot \|(\wopt-\vg_t)_I\|_2^2$. This follows from noting that $M = k + k*$ here. Now, the remaining proof follows proof of Theorem~\ref{thm:approx} closely. That is, using the above inequality with triangle inequality, we have: 
\begin{align}
\notag
&  \|\vw_{t+1}-\wopt\|_2 \\
\notag
&\leq \left(1+\sqrt{\frac{k^*}{k}}\right)\cdot\|(\wopt-\vg_t)_I\|_2 \\
&\stackrel{\zeta_1}{\leq} \left(1+\sqrt{\frac{k^*}{k}}\right)\cdot\|(\wopt-\vw_t-\eta (\nabla f(\wopt)-\nabla f(\vw_t)))_I\|_2+2\eta \|(\nabla f(\wopt))_{S_{t+1}}\|_2,\notag\\
&\stackrel{\zeta_2}{\leq}\left(1+\sqrt{\frac{k^*}{k}}\right)\cdot\|(I-\eta H_{J,J}(\alpha))(\vw_t-\wopt)_{J}\|_2+2\eta \|(\nabla f(\wopt))_{S_{t+1}}\|_2,\notag\\
&\stackrel{\zeta_3}{\leq}\left(1+\sqrt{\frac{k^*}{k}}\right)\cdot\left(1-\frac{\alpha_{2k+k^*}}{L_{2k+k^*}}\right)\|\vw_t-\wopt\|_2+\frac{2}{L_{2k + k^*}} \|(\nabla f(\wopt))_{S_{t+1}}\|_2,\label{eq:gl2}
\end{align}
where $\alpha=c \vw_t + (1-c) \wopt$ for a $c>0$ and $H(\alpha)$ is the Hessian of $f$ evaluated at $\alpha$. $\zeta_1$ follows from triangle inequality, $\zeta_2$ follows from the Mean-Value theorem and $\zeta_3$ follows from the RSC/RSS condition and by setting $\eta=1/L_{2k+k^*}$. 

The theorem now follows by setting $k=2\cdot\left(\frac{L_{2k+k^*}}{\alpha_{2k+k^*}}\right)^2$. 
\end{proof}

\begin{lemma}\label{lem:ht_large}
  Let $\vz\in \R^p$ be such that it is spanned by $M$ groups and let $\wh=P^{\G}_{k}(\vz)$, $\wopt=P^{\G}_{k^*}(\vz)$ where $k\geq k^*$ and $\G=\{G_1, \dots, G_M\}$. Then, the following holds: 
$$\|\wh-\vz\|_2^2\leq \left(\frac{M-k}{M-k^*}\right)\|\wopt-\vz\|_2^2.$$
\end{lemma}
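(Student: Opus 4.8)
The plan is to reduce the statement to a comparison of two \emph{uncovered-energy} quantities and then close the gap with a short averaging (probabilistic) argument. First I would observe that for any collection $T$ of group indices, the best vector supported on $U(T):=\cup_{i\in T}G_i$ in squared $\ell_2$-distance to $\vz$ is simply $\vz_{U(T)}$ (keep $\vz$ on $U(T)$, zero elsewhere). Hence the exact projection onto $k$ groups retains $\vz$ on the union of the $k$ groups capturing the most energy, so that $\|\wh-\vz\|_2^2=\|\vz\|_2^2-\max_{|T|\le k}\|\vz_{U(T)}\|_2^2$ is exactly the energy of $\vz$ left outside the chosen union, and likewise $\|\wopt-\vz\|_2^2$ is the uncovered energy of the best $k^*$ groups. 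Writing $R_k$ and $R_{k^*}$ for these two residuals, the target inequality becomes $R_k\le \tfrac{M-k}{M-k^*}R_{k^*}$.

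Next I would fix the optimal set $T^*$ of $k^*$ groups achieving $\wopt$ and note that, since $\vz$ is spanned by all $M$ groups, every coordinate carrying residual energy (i.e.\ in $\supp(\vz)\setminus U(T^*)$) is covered by at least one of the $M-k^*$ leftover groups. The key device is to draw a uniformly random $(k-k^*)$-subset $S$ of these leftover groups and consider the $k$-group set $T=T^*\cup S$. For a residual coordinate $i$ lying in $d_i\ge 1$ of the leftover groups, the probability that $S$ misses all of them is $\binom{M-k^*-d_i}{k-k^*}\big/\binom{M-k^*}{k-k^*}$, which is largest at $d_i=1$, where it equals exactly $\tfrac{M-k}{M-k^*}$. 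Multiplying $\vz_i^2$ by this probability and summing over all residual coordinates yields $\E\!\left[\,\|\vz\|_2^2-\|\vz_{U(T)}\|_2^2\,\right]\le \tfrac{M-k}{M-k^*}R_{k^*}$.

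Finally, since the expected uncovered energy of the random $k$-group set $T$ is at most $\tfrac{M-k}{M-k^*}R_{k^*}$, some particular choice of $k$ groups attains a residual no larger than this; as $\wh$ is the \emph{optimal} $k$-group projection, $R_k=\|\wh-\vz\|_2^2$ can only be smaller, which is the claim. I expect the only genuine subtlety to be the handling of overlapping groups: a priori the residual energy is shared among several groups, and it is unclear how much any fixed set of added groups recovers. The averaging argument sidesteps this entirely, because the per-coordinate miss probability is controlled by monotonicity in the overlap multiplicity $d_i$, and the worst case $d_i=1$ already produces the exact constant $\tfrac{M-k}{M-k^*}$; no structural assumption on the overlaps is required. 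A minor bookkeeping point is that the bound presumes $k^*\le k\le M$ so that the binomials and the fraction are well defined, which is precisely the regime in which the lemma is applied (with $M=k+k^*$).
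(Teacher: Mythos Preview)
Your argument is correct, but it takes a genuinely different route from the paper. The paper proceeds \emph{deterministically}: starting from the optimal $k^*$-group support $B=\supp(\wopt)$, it greedily appends $k-k^*$ further groups to form a set $A$, and then uses a pigeonhole bound---since the final residual $\vz_{\overline{A}}$ is spanned by at most $M-k$ unselected groups, every greedy addition captures energy at least $\tfrac{1}{M-k}\|\vz_{\overline{A}}\|_2^2$---to show $\|\vz_{\overline{A}}\|_2^2\le\tfrac{M-k}{M-k^*}\|\vz_{\overline{B}}\|_2^2$, and concludes via optimality of $\wh$. Your approach replaces the greedy extension by a \emph{random} $(k-k^*)$-subset of the leftover groups and bounds the expected residual coordinate-by-coordinate; overlaps only help (larger multiplicity $d_i$ lowers the miss probability), and the worst case $d_i=1$ already produces the exact ratio $\tfrac{M-k}{M-k^*}$. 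Both proofs hit the same constant; yours is shorter and makes the role of overlapping groups completely transparent, while the paper's constructive argument meshes naturally with the submodular/greedy framework used elsewhere in the analysis. The bookkeeping caveats you flag (padding $T^*$ to exactly $k^*$ groups if necessary, and the requirement $k\le M$) are the same ones implicitly needed in the paper's version.
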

\begin{proof}
Let $S=\supp(\wh)$ and $S_*=\supp(\wopt)$. Since $\wh$ is a projection of $\vz$, $\wh_{S}=\vz_{S}$ and $0$ otherwise. Similarly, $\wopt_{S_*}=\vz_{S_*}$. So, to prove the lemma we need to show that: \begin{equation}\label{eq:l11} \|\vz_{\overline{S}}\|_2^2\leq \left(\frac{M-k}{M-k^*}\right)\|\vz_{\overline{S_*}}\|_2^2.\end{equation}

We first construct a group-support set $A$: we first initialize $A=\{B\}$, where $B=\supp(\wopt)$. Next, we iteratively add $k-k^*$ groups greedily to form $A$. That is, $A=A\cup A_i$ where $A_i=\supp(P^{\G}_1(\vz_{\bar{A}}))$.

Let $\wti\in \R^p$ be such that $\wti_{A}=\vz_A$ and $\wti_{\overline{A}}=0$, where $\overline{A}$ denotes the complement of $A$. Also, recall that $\gnorm{\vz_{S}}=\gnorm{\vz_{\supp(\wti)}}\leq |A|= k$.  Then, using the optimality of $\wh$, we have: 
\begin{equation}
  \label{eq:l12}
  \|\vz_{\overline{S}}\|_2^2\leq \|\vz_{\overline{A}}\|_2^2.
\end{equation}

Now, 
\begin{align}
\frac{\|\vz_{\overline{B}}\|_2^2}{M-k^*}-\frac{\|\vz_{\overline{A}}\|_2^2}{M-k}=\frac{1}{M-k^*} \|\vz_{\overline{B}\backslash \overline{A}}\|_2^2-\frac{k-k^*}{(M-k^*)(M-k)}\|\vz_{\overline{A}}\|_2^2.\label{eq:l13}
\end{align}
By construction, $\overline{B}\backslash \overline{A}=\cup_{i=1}^{k-k^*}A_i$. Moreover, $\overline{A}$ is spanned by at most $M-k$ groups. Since, $A_i$'s are constructed greedily, we have: $\|\vz_{A_i}\|_2^2\geq \frac{\|\vz_{\overline{A}}\|_2^2}{M-k}$. Adding the above equation for all $1\leq i\leq k-k^*$, we get: 
\begin{equation}
  \label{eq:l14}
  \|\vz_{\overline{B}\backslash \overline{A}}\|_2^2=\sum_{i=1}^{k-k^*}\|\vz_{A_i}\|_2^2\geq \frac{k-k^*}{M-k}\|\vz_{\overline{A}}\|_2^2.
\end{equation}
Using \eqref{eq:l12}, \eqref{eq:l13}, and \eqref{eq:l14}, we get: $\frac{\|\vz_{\overline{B}}\|_2^2}{M-k^*}-\frac{\|\vz_{\overline{S}}\|_2^2}{M-k}\geq 0$. That is, \eqref{eq:l11} holds. Hence proved.
\end{proof}

\section{Proof of Theorem~\ref{thm:approx_gen}}\label{app:thm:approx_gen}
First, we provide a general result that extracts out the key property of the approximate projection operator that is required by our proof. We then show that Algorithm~\ref{alg:greedysog} satisfies that property. 

In particular, we assume that there is a set of supports ${\cal S}_{k^*}$ such that  $\supp(\wopt) \in {\cal S}_{k^*}$. Also, let ${\cal S}_k\subseteq \{0,1\}^p$ be s.t. ${\cal S}_{k^*}\subseteq {\cal S}_k$. Moreover, for any given $\vz\in \R^p$, there exists an efficient procedure to find $S\in {\cal S}_k$ s.t. the following holds for all $S_*\in {\cal S}_{k^*}$: 
\beq
\|\vz_{S\backslash S_*}\|_2^2\leq \frac{k^*}{k}\cdot \beta_\epsilon \|\vz_{S_*\backslash S}\|_2^2+\epsilon, \label{eq:sog_prop}
\eeq
where $\epsilon>0$ and $\beta_\epsilon$ is a function of $\epsilon$.

We now show that \eqref{eq:sog_prop} holds for the SoG case, specifically Algorithm \ref{alg:greedysog}. For simplicity, we provide the result for non-overlapping case; for overlapping groups a similar result can be obtained by combining the following lemma,  with Lemma~\ref{lem:ht_large1}. 
\begin{lemma}\label{lem:sogfinal}
  Let $\G=\{G_1, \dots, G_M\}$ be $M$ non-overlapping groups. Let $\gsupp(\wopt)=\{i^*_1, \dots, i^*_{k^*}\}$. Let $G$ be the groups selected using Algorithm~\ref{alg:greedysog} applied to $\vz \in \R^p$ and let $S_i$ be the selected set of co-ordinates from group $G_i$ where $i\in G$. Let $S=\cup_i S_i$, and let $S_*=\cup_i (S_*)_i=\supp(\wopt)$. Also, let $G^*$ be the set of groups that contains $S_*$. Then, the following holds:
$$\|\vz_{S\backslash S^*}\|_2^2\leq \max\left(\frac{k_1^*}{k_1}, \frac{k_2^*}{k_2}\right)\cdot \|\vz_{S^*\backslash S}\|_2^2.$$
\end{lemma}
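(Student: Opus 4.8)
The plan is to run the single–level greedy estimate of Lemma~\ref{lem:ht_large1} simultaneously at the two scales on which Algorithm~\ref{alg:greedysog} operates: the outer scale, where whole groups are greedily selected by $\|\vv_{G^\star}\|$, and the inner scale, where the top $k_2$ coordinates of each chosen group are retained. Because the groups are non-overlapping, the coordinate budget spent inside one group never interacts with another, so the two scales can be analyzed almost independently, and one should only have to pay the worse of the two ratios at the end — which is exactly what the factor $\max(k_1^*/k_1,\,k_2^*/k_2)$ records.

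First I would fix the decomposition. Write $G=\{i_1,\dots,i_{k_1}\}$ for the groups chosen by Algorithm~\ref{alg:greedysog}, in the order they are picked, $S_i$ for the top-$k_2$ coordinates retained from $G_i$ with $i\in G$, and $S=\cup_i S_i$; similarly let $\gsupp(\wopt)=\{i_1^*,\dots,i_{k_1^*}^*\}$ with per-group coordinate sets $(S_*)_i$ and $S_*=\supp(\wopt)$. Exploiting disjointness of the groups, I would split the coordinates of $S\backslash S^*$ and of $S^*\backslash S$ into their group-wise pieces and sort them into two types: type (i), coordinates that differ because an optimal group was never selected at the outer level, and type (ii), coordinates that differ inside a group chosen by both $G$ and $\gsupp(\wopt)$, i.e. at the inner top-$k_2$ level.

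The heart of the argument is then two greedy comparisons, one per level, run in the style of \eqref{eq:htl113}–\eqref{eq:htl114}. At the outer level the defining property of Algorithm~\ref{alg:greedysog} is that the group selected at each step has norm no smaller than any not-yet-selected group; summing these marginal comparisons over the at most $k_1^*$ optimal groups that greedy never touches and normalizing by $k_1$ produces the group-level constant $k_1^*/k_1$ controlling the type-(i) energy. At the inner level, the coordinates kept from a group are its top $k_2$ by magnitude, so each optimal coordinate dropped from a shared group has magnitude at most that of every kept coordinate; a counting argument over the at most $k_2^*$ dropped coordinates against the $k_2$ kept ones produces the coordinate-level constant $k_2^*/k_2$ controlling the type-(ii) energy. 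Combining the two contributions and retaining the larger of the two constants yields
\[
\|\vz_{S\backslash S^*}\|_2^2\leq \max\!\left(\tfrac{k_1^*}{k_1},\,\tfrac{k_2^*}{k_2}\right)\cdot\|\vz_{S^*\backslash S}\|_2^2.
\]

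The main obstacle I anticipate is the coupling between the two levels, caused by the mismatch between what Algorithm~\ref{alg:greedysog} ranks and what it ultimately keeps: the outer selection compares groups by their full norm $\|\vv_G\|$, whereas the quantity that actually enters the bound is the top-$k_2$ energy of a group. I would keep the two scales from multiplying by using the full-norm comparison only to account for entirely missed optimal groups (type (i)) and invoking the top-$k_2$ maximality separately for shared groups (type (ii)), so that the clean $\max$ of the two ratios survives rather than a product. Finally, for the general overlapping case, I would follow the remark after the lemma and replace the exact outer counting step by the approximate group-level estimate of Lemma~\ref{lem:ht_large1}, which carries the same $e^{-\tilde k/k}$-type slack already isolated there and therefore extends the bound at the cost of an additive $\epsilon$ consistent with \eqref{eq:sog_prop}.
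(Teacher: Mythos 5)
Your overall plan coincides with the paper's proof of this lemma: the paper handles your type-(ii) coordinates exactly as you do, via the standard top-$k_2$ hard-thresholding comparison inside each shared group $i\in G\cap G^*$ (its \eqref{eq:sog1}, quoting Lemma 1 of \cite{jainiht}, yielding the $k_2^*/k_2$ factor), handles your type-(i) coordinates by a single cross-group greedy comparison between $G\backslash G^*$ and $G^*\backslash G$ (its \eqref{eq:sog2}, yielding $k_1^*/k_1$), and concludes by adding the two bounds and taking the max; your closing remark about extending to overlapping groups via Lemma~\ref{lem:ht_large1} also matches the paper's remark preceding the lemma. One caveat on direction: your two comparisons (correctly) bound the \emph{missed} optimal energy by the \emph{retained} extra energy, i.e.\ they prove $\|\vz_{S^*\backslash S}\|_2^2\leq \max(k_1^*/k_1,\,k_2^*/k_2)\,\|\vz_{S\backslash S^*}\|_2^2$, which is also the inequality consumed downstream in \eqref{eq:apgen1}; your final display (and the lemma statement itself, which it copies) has $S$ and $S^*$ swapped relative to what the argument delivers.

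The genuine gap is in how you resolve the two-scale mismatch that you yourself flag. You propose to use the full-norm comparison $\|\vv_{G_i}\|\geq\|\vv_{G_j}\|$ to control the type-(i) energy, but that comparison does not transfer to the quantities in the bound: the left side involves the \emph{retained} energy $\|\vz_{S_i}\|_2^2$ (top $k_2$ only), which can be an arbitrarily small fraction of $\|\vz_{G_i}\|_2^2$. Concretely, take $k_1=k_1^*=k_2=k_2^*=1$, let $G_1$ consist of $100$ coordinates all equal to $1$ and $G_2$ of a single coordinate equal to $9$, with $\wopt$ supported on $G_2$. Full-norm greedy selects $G_1$ (since $100>81$) and retains energy $1$, while the missed optimal energy is $81$, so the meaningful inequality fails with the claimed constant $\max(k_1^*/k_1,k_2^*/k_2)=1$. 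The cross-group step only goes through if groups are ranked by their retained top-$k_2$ energy: then for $i\in G\backslash G^*$ and $j\in G^*\backslash G$ one has $\|\vz_{S_i}\|_2^2\geq\|\left(\vz_{G_j}\right)_{\text{top-}k_2}\|_2^2\geq\|\vz_{(S_*)_j}\|_2^2$, the last step because $|(S_*)_j|\leq k_2^*\leq k_2$, and the counting over $|G^*\backslash G|\leq k_1^*$ versus $|G\backslash G^*|$ groups gives the $k_1^*/k_1$ ratio. This top-$k_2$ ranking is what the paper's inequality \eqref{eq:sog2} silently assumes, and what its prose (``greedily selects the groups that have large top-$k_2$ elements by magnitude'') describes, even though the pseudocode of Algorithm~\ref{alg:greedysog} literally writes $\arg\max_{G}\|\vv_G\|$. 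So your instinct that the outer criterion and the retained quantity are mismatched is exactly right, but the fix is to change (or reinterpret) the selection criterion, not to lean on the full-norm comparison for the missed groups.
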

\begin{proof}
  Consider group $G_i$ s.t. $i\in G\cap G^*$. Now, in a group we just select elements $S_i$ by the standard hard thresholding. Hence, using Lemma 1 from \cite{jainiht}, we have: 
  \begin{equation}
    \label{eq:sog1}
    \|\vz_{(S_*)_i\backslash S}\|_2^2\geq \frac{k_2}{k_2^*}\|\vz_{S\backslash (S_*)_i}\|_2^2, \forall i\in G\cap G^*.
  \end{equation}
Due to greedy selection, for each $G_i$, $G_j$ s.t. $i\in G\backslash G^*$ and $j\in G^*\backslash G$, we have: 
$$\sum_{i\in G\backslash G^*}\|\vz_{S_i}\|_2^2 \geq \frac{|G\backslash G^*|}{|G^*\backslash G|}\sum_{j\in G^*\backslash G}\|\vz_{S_j}\|_2^2.$$
That is, 
  \begin{equation}
    \label{eq:sog2}
\sum_{i\in G\backslash G^*}\|\vz_{S_i}\|_2^2 \geq \frac{k_1}{k_1^*}\sum_{j\in G^*\backslash G}\|\vz_{S_j}\|_2^2.
  \end{equation}
The lemma now follows by adding \eqref{eq:sog1} and \eqref{eq:sog2}, and rearranging the terms.
\end{proof}

Now, we prove Theorem \ref{thm:approx_gen}

\begin{proof}
Theorem follows directly from proof of Theorem~\ref{thm:approx}, but with \eqref{eq:ap1} replaced by the following equation: 
\begin{align}
  \label{eq:apgen1}
  \|(\vw_{t+1}-\vg_t)_I\|_2^2=\|(\vg_t)_M\|_2^2\stackrel{\zeta_1}{\leq} \frac{k^*}{k}\cdot \beta_\epsilon \|(\vg_t)_{S_{t+1}\backslash S_*}\|_2^2+\epsilon\stackrel{\zeta_2}{\leq} \frac{k^*}{k}\cdot \beta_\epsilon \cdot \|(\wopt-\vg_t)_I\|_2^2+\epsilon,
\end{align}
where $\zeta_1$ follows from the assumption given in the theorem statement. $\zeta_2$ follows from  $\wopt_{S_{t+1}\backslash S_*}=0$.


\end{proof}

\section{Results for the Least Squares Sparse Overlapping Group Lasso}
\label{lemsgl}

Lemma \ref{lem:sogfinal} along with Theorem~\ref{thm:approx_gen} shows that for SoG case, we need to project onto more than  (than $k_1^*$) groups {\em and} more than (than $k_2^*$) number of elements in each group. In particular, we select $k_i\approx (\frac{L_{2k+k^*}}{\alpha_{2k+k^*}})^2 k_i^*$ for both $i=1,2$. 

Combining the above lemma with Theorem~\ref{thm:approx_gen} and a similar lemma to Lemma~\ref{lem:rsc_gl} also provides us with sample complexity bound for estimating $\wopt$ from $(y,X)$ s.t. $y=X\wopt+ {\bm \beta}$. Specifically, the sample complexity evaluates to $n \geq \kappa^2 \left( k_1^* \log(M) + \kappa^2 k_1^* k_2^* \log(\max_{i}|G_i|) \right)$.

\section{Additional Experimental Evaluations }
\label{app:dwt}

\paragraph{Noisy Compressed Sensing:} Here, we apply our proposed methods in a compressed sensing framework to recover sparse wavelet coefficients of signals. We used the standard ``test" signals (Table \ref{tab:dwt}) of length $2048$, and obtained $512$ Gaussian measurements. We set $k = 100$ for IHT and GOMP. IHT is competitive (in terms of accuracy) with the state of the art in convex methods, while being significantly faster. Figure 3 shows the recovered blocks signal using IHT. All parameters were picked clairvoyantly via a grid search.

\begin{table}[!t]
\centering
 \begin{tabular}{ || c |c | c | c | || }
 \hline
 \textbf{Signal} & \textbf{IHT} & \textbf{GOMP} & \textbf{CoGEnT} \\
  \hline
  Blocks     		&$\bm{.00029} $	&  $.0011$	 &  $.00066 $ 	\\
  HeaviSine		&$.0026 $	 &  $.0029 $	& $\bm{.0021}$ \\
  Piece-Polynomial	&$\bm{.0016}$	& $.0017$       &$.0022 $\\
  Piece-Regular	&$.0025$ 	&$.0039$  	& $\bm{.0015} $\\
  \hline
\end{tabular}
 \caption{MSE on standard test signals using IHT with full corrections}
   \label{tab:dwt}\vspace*{-12pt}
\end{table}

\begin{figure}  [!h]
  \begin{minipage}[b]{0.45\linewidth}
    \centering
   \includegraphics[width = 50mm, height = 40mm]{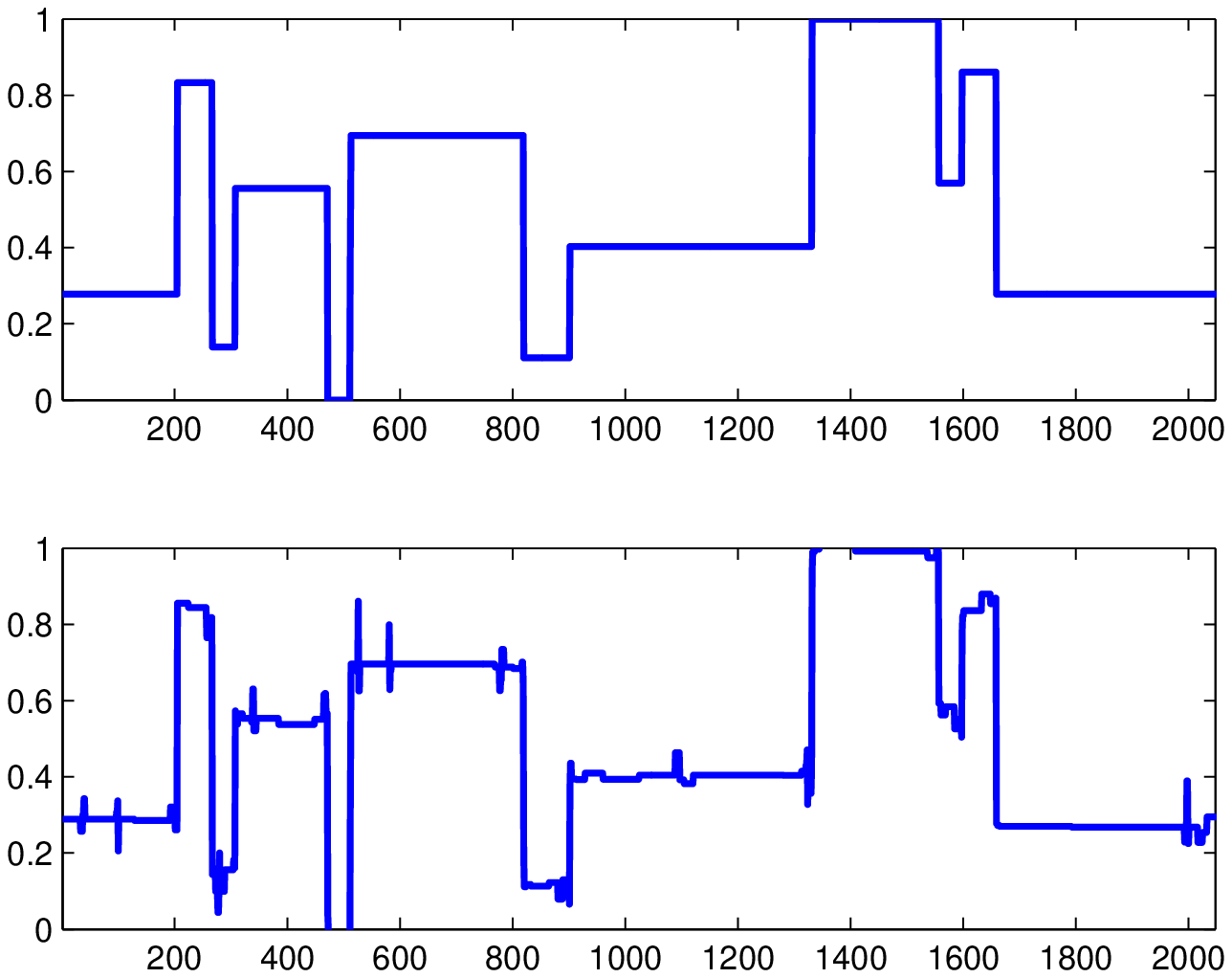}
\label{compare_objs}
  \end{minipage} 
    \hspace{-2ex}
  \begin{minipage}[b]{0.45\linewidth}
    \centering
   \includegraphics[width = 50mm, height = 40mm]{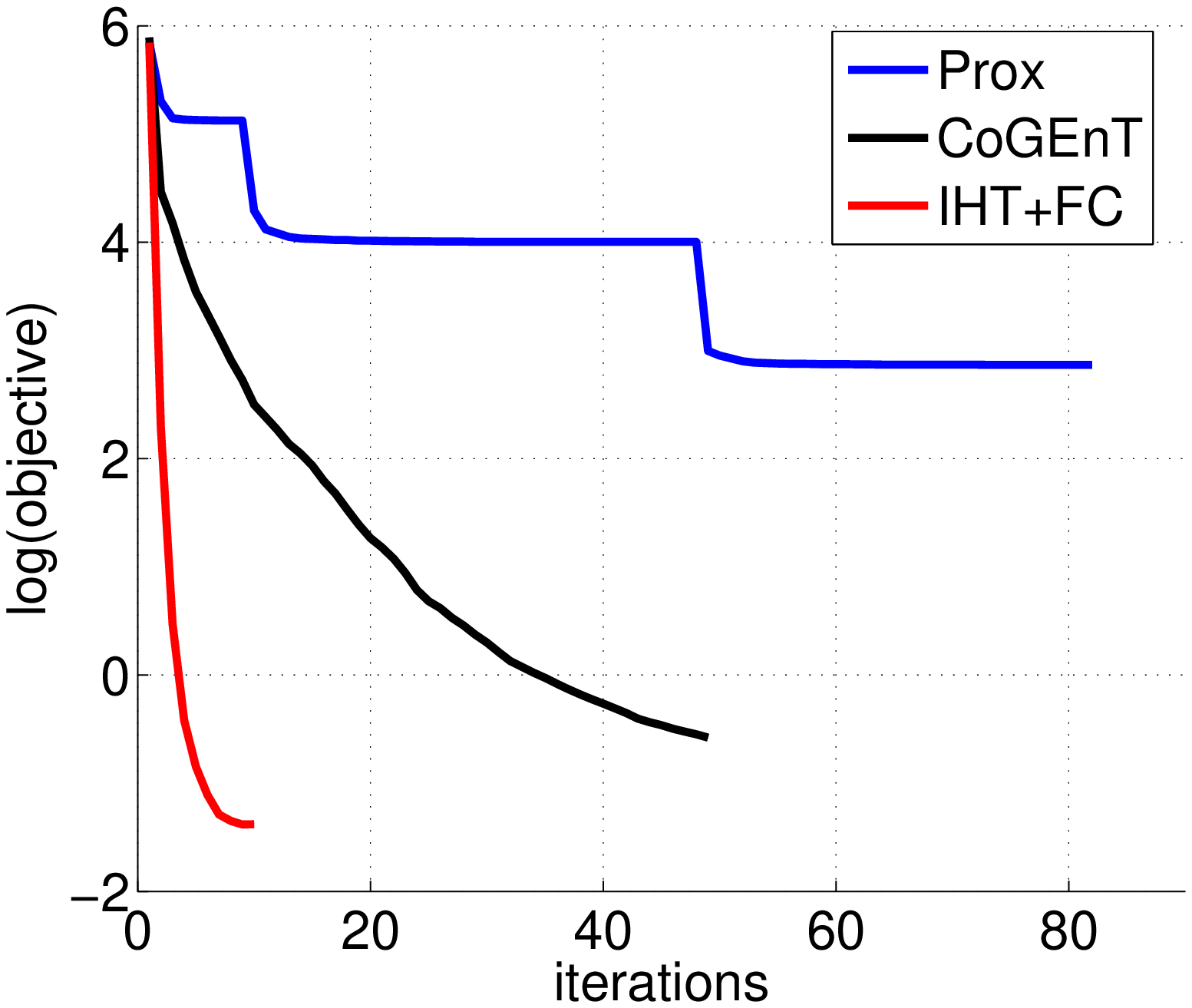}
\label{compare_poor}
  \end{minipage} 
  \caption{Wavelet Transform recovery of 1-D test signals. (Left) The `blocks' signal and recovery using IHT $+$ Greedy projections. (Right) Objective function vs iterations on the `blocks' signal.}
\end{figure}



\end{document}